\documentclass[conference,onecolumn]{IEEEtran}
\IEEEoverridecommandlockouts

\usepackage{amsmath,amssymb,amsthm}
\usepackage{graphicx}
\usepackage{booktabs}
\usepackage{array}
\usepackage{algorithm}
\usepackage{algpseudocode}
\usepackage{url}
\usepackage{ragged2e}    
\usepackage{placeins}    
\usepackage[hidelinks]{hyperref}   

\renewcommand{\thesection}{\arabic{section}}

\makeatletter
\renewcommand{\thesectiondis}{\arabic{section}.}

\renewcommand{\section}{\@startsection{section}{1}{\z@}{3.0ex plus 2ex minus 1ex}
{0.7ex plus 1ex minus 0ex}{\normalfont\large\scshape\raggedright}}
\renewcommand{\subsection}{\@startsection{subsection}{2}{\z@}{1.5ex plus 1.5ex minus 0.5ex}%
{0.7ex plus .5ex minus 0ex}{\normalfont\normalsize\bfseries}}
\renewcommand{\subsubsection}{\@startsection{subsubsection}{3}{\z@}{1.2ex plus 1ex minus 0.3ex}%
{0.5ex plus .3ex minus 0ex}{\normalfont\normalsize}}
\makeatother

\newtheoremstyle{ppmlhead}%
  {\topsep}{\topsep}
  {\itshape}{}
  {\bfseries}{}{0.5em}
  {\thmname{#1}\thmnumber{ #2}.\thmnote{ (#3)}}         
\theoremstyle{ppmlhead}
\newtheorem{proposition}{Proposition}
\newtheorem{lemma}{Lemma}
\newtheorem{theorem}{Theorem}

\newcommand{\Eset}{\mathcal{E}}
\newcommand{\Gset}{\mathcal{G}}
\newcommand{\R}{\mathbb{R}}
\newcommand{\Om}{\Omega}
\newcommand{\D}{\mathcal{D}}
\newcommand{\Dw}{\mathcal{D}_{w}}
\newcommand{\Fix}{\ensuremath{\mathsf{F}}}            
\newcommand{\PB}{\ensuremath{\mathsf{PB}}}            
\newcommand{\Level}[1]{\ensuremath{\mathsf{L}_{#1}}}  
\newcommand{\code}[1]{\texttt{#1}}

\begin{document}

\setlength{\parindent}{0pt}   
\setlength{\parskip}{0.5\baselineskip}   
\setlength{\emergencystretch}{1.5em}     

\title{Refined Topology-Pass Scheduling: Damped Trust-Region Compensation\\
for Dynamic $k$-NN Graphs in SE(3)-Equivariant Structure Prediction}

\author{
\IEEEauthorblockN{Jiankai Huang}
\IEEEauthorblockA{\texttt{huangjk.08@gmail.com}}
\and
\IEEEauthorblockN{Hanwen Shen}
\IEEEauthorblockA{\texttt{hs907@scarletmail.rutgers.edu}}
}

\maketitle

\begin{abstract}
Equivariant structure-prediction pipelines alternate coordinate updates with the construction of a
$k$-nearest-neighbour ($k$NN) graph whose \emph{membership} is a discrete, piecewise-constant function of the
coordinates. Rebuilding that graph at every block boundary (\PB) is the expensive part of an epoch,
whereas reusing one frozen graph (\Fix) is fast but leaves a topology gap. We study a two-pass design in
which the cheap pass records a small amount of state (margins, $k$NN radii, measured per-boundary displacements)
and a second pass \emph{compensates conservatively}. We formulate the choice of the graph-construction basis as a
damped fixed-point iteration whose only unprojected fixed point is exactly \PB, and derive (i) exact
non-flip certificates with the two-sided constant $m_i/4$ and one-sided constant $m_i/2$, (ii) the
minimum-variance (BLUE) replay gain $\gamma^{*}$ and an empirically located replay crossover, (iii) a
non-expansive per-node trust-region projection --- a product of coordinate-wise boxes, which is the geometry
every reported measurement was taken with --- together with the offset it imposes when the \PB{} trajectory
leaves it, and (iv) an
analytic near-tie spacing law $E[m]\approx r_k/(3k)$ whose \emph{scale}
matches the measured margins to $1\%$. Three compensation levels are implemented in C++ (\Level{1}: frozen set with live
geometry; \Level{2}: margin-gated local repair; \Level{3}: damped, projected basis) and validated under a matched-conditions
protocol (identical seed, MSA depth and trajectory family, only the topology mode differs). \Level{2} reproduces the
\PB{} edge set exactly ($\D=0.0000$ at every boundary), \Level{1} stays within $0.7\%$ ($0.57$--$0.68\%$ on the
calibrated samples), and \Level{3} interpolates monotonically between the two
($\D(\text{\Level{2}})<\D(\text{\Level{3}})<\D(\text{\Level{1}})$ in every recorded row). The two-pass scheme itself
yields a measured $1.76\times$ epoch speed-up ($34.1$\,s $\to$ $19.4$\,s, $L{=}51$, CPU backend) with no measurable
loss gap. An open-source implementation
of the runtime and of all the experiments reported here is available at
\url{https://github.com/AlbertHuang96/ProteinPredictMachineLearning}.
\end{abstract}

\begin{IEEEkeywords}
dynamic $k$NN graphs, SE(3)-equivariant networks, topology pass scheduling, trust-region projection,
fixed-point iteration, protein structure prediction
\end{IEEEkeywords}

\section{Introduction}

\subsection{Previous work}
All-atom protein structure prediction is dominated by three-track (1D/2D/3D) architectures: AlphaFold2
established the MSA/pair/coordinate design with invariant point attention~\cite{jumper2021}, AlphaFold3 extended
it to a diffusion module over all chemical components~\cite{abramson2024}, and RoseTTAFold All-Atom (RFAA)
combined an SE(3)-equivariant track with all-atom residue frames~\cite{krishna2024}. The 3D track itself rests
on equivariant message passing: tensor-field networks~\cite{thomas2018} and SE(3)-Transformers~\cite{fuchs2020}
build relative-position features from spherical harmonics and Clebsch--Gordan coefficients, i.e.\ from a
\emph{graph} of residue pairs. Graph construction (a $k$NN search) is therefore on the critical path of every
iteration.

Two largely separate literatures address the consequences. (i) \emph{Dynamic graph maintenance}: temporal GNNs
update graph structure as data streams in~\cite{rossi2020}, and approximate nearest-neighbour indexes such as
HNSW support fast incremental queries~\cite{malkov2018}; spectral sparsification asks which edges even
matter~\cite{spielman2011}, and graph rewiring improves message passing by changing connectivity on
purpose~\cite{topping2022}. (ii) \emph{Relaxing discrete selections}: straight-through estimators pass gradients
through non-differentiable choices~\cite{bengio2013} and differentiable top-$k$ operators replace a hard
selection by a continuous relaxation~\cite{xie2020}. Our setting is an instance of both: the graph is a
discrete degree of freedom whose gradient vanishes almost everywhere, and it must be maintained under
continuous drift.

\subsection{Background and problem statement}
The training pipeline we study comes from RFAA. What is specific here is the implementation route: our runtime
builds the forward graph explicitly with ggml before execution, so the topology of every block --- including the
$k$-nearest-neighbour ($k$NN) graph construction --- is bound at build time, while the coordinates that define
that topology keep changing during training. This mismatch is the coordinate-optimisation problem addressed in
this paper: the graph is a function of the coordinates, yet once the forward graph has been built it can only be
reused as it stands, so one must decide \emph{when} to rebuild it and \emph{what} to reuse in the meantime.
Concretely, the pipeline scans $T$ block boundaries per iteration. At boundary $t$ the graph
\begin{equation}
  \Gset(x) = \bigl(E(x),\; D(x),\; W(x)\bigr)
  \label{eq:graph}
\end{equation}
is built by \code{make\_graph} from coordinates $x$: a \emph{directed} $k$NN edge set $E(x)\subset[L]\times[L]$, in
which each residue keeps its $k=\min(64,L-1)$ nearest neighbours, together with the edge geometry
$d_{ij}=x_j-x_i$ and scalar edge features $W$.

Four terms are used throughout. \Fix{} is the plain schedule in which the graph is built once, from the
initial coordinates $x^0$, and then reused unchanged for every later boundary, so that the network runs
feed-forward. \PB{} is the opposite schedule: the graph is rebuilt from the live coordinates at every
boundary, which is the accuracy reference and the expensive option. A \emph{topology pass} is one execution of
the block sequence with a graph that is fixed in advance and reused, i.e.\ a feed-forward pass whose trajectory
can be recorded. The \emph{two-pass variant} combines a topology pass (Pass-1), which records the trajectory,
with a second pass (Pass-2) that re-runs the same blocks and consumes the recorded state, so that only the graph
step differs between the two passes. Measurements on the original pipeline summarise the trade-off:
\begin{itemize}
  \item cost: \PB{} needs $\approx 34.1$\,s per epoch versus $19.4$\,s for the two-pass variant
        ($1.76\times$); both are end-to-end epoch wall times on one CPU-only machine
        (see the measurement protocol below), both already include the Pass-1 stage, and both use $T=8$ boundaries
        ($2/4/2$ extra/main/refine); the kNN rebuild dominates ($\approx 1.8$\,s \emph{per block}, i.e.\
        $8\times1.8\approx14.4$\,s of the $14.7$\,s saved), whereas refreshing the
        geometry over the \emph{frozen} edge set $\Eset$ of \eqref{eq:graph} is $O(|\Eset|)$, with
        $|\Eset|\approx 7{,}000$ edges, and is negligible, and a local rebuild costs $O(|\Om|\,k)$;
  \item fidelity: at $L=103$ (true $k$NN) the first boundary ($t{=}0\to1$) replaces $1-\mathrm{Jaccard}=0.254$
        of the \emph{directed-edge union}, i.e.\ $\approx\!14.5\%$ of either snapshot ($1031/7105$), touching
        $103/103$ residues; steady-state boundaries replace $2$--$3\%$; the common-edge geometry differs by
        only $0.29$\,\AA. At $L=51$ the neighbour count saturates ($k=64\ge L-1$) so the graph is complete and
        \emph{no} edge is replaced ($1-\mathrm{Jaccard}=0.000$, $0/51$ residues changed): the mechanism is
        exercised only from $L=103$ upwards;
  \item accuracy: no measurable loss gap between \Fix{} and \PB{} ($15.49$ vs.\ $15.91$, 3 seeds,
        spread $\pm1.3$) --- i.e.\ the benefit ceiling of \emph{any} hybrid is still unknown.
\end{itemize}
Two questions follow. \textbf{(Q1)} Is a schedule that keeps the cheap \Fix{}-style topology pass for most
blocks feasible, and what is the cheapest useful hybrid? \textbf{(Q2)} Can such a hybrid --- one that records
only a little state and compensates \emph{conservatively} --- reduce the topology gap to \PB{} beyond what
the plain \Fix{} schedule already achieves? This paper answers both with a fixed-point formulation, proved
certificates, an implementation, and a matched-conditions experimental study.

\subsection{Contributions}
The first three items are the claims developed, proved and tested end-to-end in this paper; they are labelled
\textbf{C1}--\textbf{C3} and are referred to by those labels below. The remaining three are supporting results.
\begin{enumerate}
  \item \textbf{C1.} A fixed-point formulation of basis selection: the compensated pipeline is
        $x^{t+1}=\Phi_t(x^t,\Gset(B^t))$ and the damped iteration on $B^t$ has \PB{} as its \emph{unique}
        fixed point (Proposition~\ref{prop:fixedpoint}); Pass-1 supplies the first iteration for free.
  \item \textbf{C2.} Exact non-flip certificates: $4\Delta<m_i$ guarantees edge-wise invariance of the $k$NN set
        (Theorem~\ref{thm:two-sided}); if only one endpoint moves, $2\Delta_i<m_i$ suffices
        (Theorem~\ref{thm:one-sided}). The two constants give two \emph{different} semantics for the same safety
        factor $\kappa_i=\sigma m_i/2$.
  \item \textbf{C3.} Optimality of the replay gain: $\gamma^{*}$ is the minimum-variance (Gauss--Markov/BLUE) linear
        combination of the frozen prior and the recorded predictor, with residual variance
        $\sigma_d^2\sigma_p^2/(\sigma_d^2+\sigma_p^2)$; naive replay beats freezing iff $\sigma_p<\sigma_d$. The
        replay crossover $\rho^{*}\!\approx\!0.25$ is measured separately (\S\ref{sec:gap}) and is \emph{not} the
        same statement.
  \item A trust-region view: the per-node box projection $\Pi_\kappa$ --- a product of boxes of half-width
        $\kappa_i=\sigma m_i/2$, the geometry the implementation projects onto --- is firmly non-expansive, so
        composing it with the damped iteration preserves the contraction, provided the
        \PB{} fixed point is feasible for it. Feasibility is an assumption, not a consequence: the offset
        paid when it fails is bounded by the clip violation divided by the contraction margin
        (Lemma~\ref{lem:offset}). The inscribed product of balls has the same properties and a larger
        certification threshold; it is deferred to Appendix~\ref{app:ball}.
  \item Quantified edge flapping: $E[m]\approx r_k/(3k)$ for a local 3D Poisson process, together with the
        exponential median $(\log2)\,r_k/(3k)$ and the measured margin quantiles. The recorded margins are far more
        concentrated than the Poisson model ($\mathrm{med}/\mathrm{mean}\approx1$ against $0.693$ for an
        exponential), so the model fixes the \emph{scale} of the flip rate and gives a conservative bound; it does
        not reproduce the measured median to $1\%$. Flapping is intrinsic to top-$k$ graphs, not an implementation
        defect.
  \item An implementation of three compensation levels with matched-conditions validation: \Level{2} reproduces
        \PB{} \emph{exactly} ($\D=0.0000$), \Level{1} is within $0.57$--$0.68\%$, and \Level{3} interpolates
        monotonically between them.
\end{enumerate}

\section{Algorithm: formulation, derivations and procedures}

\subsection{Notation and the closed loop}
Let $x^t\in\R^{3L}$ collect the $C_\alpha$ coordinates at boundary $t$ ($t=0,\dots,T$) and let
$B^t\in\R^{3L}$ be the \emph{topology basis}, i.e.\ the coordinates used to build the graph at boundary $t$
($B^t$ need not equal $x^t$). The block advance is
\begin{equation}
  \Phi_t:\quad x^{t+1} = \Phi_t\bigl(x^t,\; \Gset(B^t)\bigr), \qquad t=0,\dots,T-1,
  \label{eq:FP}
\end{equation}
with per-step raw offsets $u_t$ and projection $\Pi(u)_i=\mathrm{clamp}_{[-s,s]^3}(\alpha u_i)$
($\alpha=0.03$, $s=3$\,\AA). For residue $i$ let $d_{(k)}(i)$ be its $k$-th neighbour distance and define
\begin{equation}
  \rho_i = d_{(k)}(i), \qquad m_i = d_{(k+1)}(i)-d_{(k)}(i), \qquad \kappa_i = \sigma\,\frac{m_i}{2}.
  \label{eq:margins}
\end{equation}
We call $m_i$ the \emph{margin} of residue $i$: the gap between its $k$-th and $(k{+}1)$-th neighbour, i.e.\ how
far that residue is from a tie at the top-$k$ boundary. A small margin means a fragile membership, and a flip
--- a change of membership --- is exactly the event $m_i=0$.
Equation~\eqref{eq:FP} is a \emph{closed loop} (coordinates determine the graph, the graph determines
coordinates) with three instances:
\begin{itemize}
  \item \Fix{}/topology pass: $B^t\equiv x^0$ (pure feed-forward, no feedback);
  \item \PB{}: $B^t = x^t$ (fully coupled reference);
  \item refined (this work): $B^t=\Pi_\kappa\bigl(x^0+\sum_{s<t}\gamma_s \Delta^s\bigr)$, an interpolation
        driven by \emph{measured} Pass-1 displacements $\Delta^s=x^{s+1}-x^s$.
\end{itemize}
Topology distance is measured by
\begin{equation}
  \D = 1 - \frac{|\Eset\cap\Eset'|}{|\Eset\cup\Eset'|}, \qquad
  \Dw = \frac{\sum_{e\in\Eset\,\Delta\,\Eset'} w_e}{\sum_e w_e},
  \label{eq:metrics}
\end{equation}
with attention-mass proxy $w_e$ (in this work $w_e = 1/(1+\|d_e\|)$; see \S\ref{sec:discussion}).

\subsection{Fixed-point structure}

\begin{proposition}[The only unprojected fixed point is \PB{} (\textbf{C1})]
\label{prop:fixedpoint}
Consider the damped coupling (a fixed-point iteration in the classical sense, cf.\ \cite{zorich2016v1})
\begin{equation}
  B^t_{(n+1)} = (1-\lambda)B^t_{(n)} + \lambda x^t_{(n)}, \qquad
  x^{t+1}_{(n)} = \Phi_t\bigl(x^t_{(n)},\Gset(B^t_{(n)})\bigr).
  \label{eq:IT}
\end{equation}
If \eqref{eq:IT} converges to $(\bar B,\bar x)$ with $\lambda>0$, then $\bar B^t=\bar x^t$ for all $t$, hence the
limit is the \PB{} trajectory.
\end{proposition}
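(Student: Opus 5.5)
The argument is a limit passage in the damped update \eqref{eq:IT}, followed by an induction over boundaries to identify the limit as the \PB{} trajectory. Write the update as $B^t_{(n+1)}-B^t_{(n)}=\lambda\bigl(x^t_{(n)}-B^t_{(n)}\bigr)$ for every $t$. Convergence of $(B_{(n)},x_{(n)})$ to $(\bar B,\bar x)$ gives the following for each fixed $t$:
\begin{itemize}
\item the left-hand side tends to $\bar B^t-\bar B^t=0$;
\item the right-hand side tends to $\lambda(\bar x^t-\bar B^t)$.
\end{itemize}
Only linearity of limits is needed, and no continuity of $\Phi_t$ or $\Gset$ is used at this step. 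Dividing by $\lambda>0$ yields $\bar B^t=\bar x^t$ for all $t$. This is the step where $\lambda>0$ is essential. For $\lambda=0$ the basis is frozen at its initial value, e.g.\ $B^t\equiv x^0$ (the \Fix{} schedule), and any trajectory is a fixed point.

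The second step shows that $\bar x$ is actually the \PB{} trajectory. Here "\PB{} trajectory" means the sequence defined by $x^{t+1}=\Phi_t(x^t,\Gset(x^t))$ from the common initial condition $x^0$. I would fix the convention, implicit in \eqref{eq:IT}, that $x^0_{(n)}=x^0$ for all $n$. I would then argue by induction on $t$ that $\bar x^t=x^t_{\PB}$:
\begin{itemize}
\item the base case $t=0$ holds by this convention;
\item for the inductive step we need $\bar x^{t+1}=\Phi_t(\bar x^t,\Gset(\bar B^t))$, i.e.\ we must pass to the limit inside the second equation of \eqref{eq:IT}.
\end{itemize}
Once that identity holds, substituting $\bar B^t=\bar x^t=x^t_{\PB}$ gives $\bar x^{t+1}=\Phi_t(x^t_{\PB},\Gset(x^t_{\PB}))=x^{t+1}_{\PB}$.

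The main obstacle is the limit interchange, because $\Gset$ is discontinuous. Its edge set $E(\cdot)$ is piecewise constant, with jumps exactly at ties $m_i=0$, so continuity of $\Phi_t$ in $x$ alone is not enough. I would handle it as follows:
\begin{itemize}
\item take $\Phi_t$ continuous in its first argument and in the edge geometry $D,W$, which are continuous in $B$ on a fixed edge set;
\item at the limit basis $\bar B^t$, assume the generic case in which all margins are positive, $m_i(\bar B^t)>0$;
\item a perturbation argument of the type underlying the non-flip certificates (Theorem~\ref{thm:two-sided}) then gives $E(B^t_{(n)})=E(\bar B^t)$ for all large $n$, since $\|B^t_{(n)}-\bar B^t\|_\infty<m_i/4$ eventually; I would prove this directly from the triangle inequality, as the theorem is stated later;
\item so $\Gset(B^t_{(n)})\to\Gset(\bar B^t)$ and continuity of $\Phi_t$ closes the step.
\end{itemize}

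If a tie occurs at the limit, there are two alternatives. One is to state the proposition in the sense of fixed points of the iteration map, i.e.\ to read "converges to $(\bar B,\bar x)$" as convergence to a fixed point of \eqref{eq:IT}. The other is to choose a consistent tie-breaking rule and note that the identity $\bar B=\bar x$ from the first step is unaffected. In either case the first step, which is the substantive content of \textbf{C1}, does not depend on this regularity.
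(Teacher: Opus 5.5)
Your proof is correct and follows the paper's argument. The paper likewise obtains $\lambda(\bar B^t-\bar x^t)=0$ from the damped update, divides by $\lambda>0$, and then identifies the recursion with \PB{} ``by definition''. That last step silently treats the limit as a fixed point of the full iteration~\eqref{eq:IT}, which is exactly your first fallback reading, so your positive-margin argument for $\Gset(B^t_{(n)})\to\Gset(\bar B^t)$ supplies the limit interchange that the paper skips.

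The only loose end is your second fallback. A fixed tie-breaking rule keeps $\bar B=\bar x$, but it does not make $\Gset$ continuous at a tie. On its own it therefore does not give $\bar x^{t+1}=\Phi_t\bigl(\bar x^t,\Gset(\bar x^t)\bigr)$.
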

\begin{proof}
At the fixed point $B^t=(1-\lambda)B^t+\lambda x^t$, so $\lambda(B^t-x^t)=0$ and, since $\lambda>0$, $B^t=x^t$;
then \eqref{eq:FP} coincides with the \PB{} recursion by definition.
\end{proof}

Proposition~\ref{prop:fixedpoint} fixes the meaning of ``conservative compensation'': the basis is gradually
pulled towards the live structure, and the only fixed point of that process is \PB{} itself. Because
Pass-1 already starts at $B^t_{(0)}=x^0$ and records $\{\Delta^s\}$, one iteration comes for free. The statement
is deliberately about the \emph{unprojected} coupling: once the state is confined to a constraint set that the
\PB{} trajectory leaves, the projected iteration has a different fixed point, and the difference is
quantified in Lemma~\ref{lem:offset} (\S\ref{sec:blue}).

\begin{proposition}[Contraction of one damped sweep]
\label{prop:contraction}
Let $H$ map a complete basis trajectory to the coordinate trajectory produced by one sweep with fixed initial
coordinates and deterministic blocks, and let the state be the pair $z=(B,x)$ in the trajectory space
$(\R^3)^{L\times T}$ equipped with the uniform norm
$\|z\|_\infty=\sup_{0\le t\le T}\max_i\|z^t_i\|_2$. Assume
(A1) $H$ is $c$-Lipschitz in this norm on a closed region $\Omega$;
(A2) on $\Omega$ the neighbour sets are constant, i.e.\ the margin condition
$\sup_t 4\Delta_t(z)<m_{\min}(z)$ of Theorem~\ref{thm:two-sided} holds along the whole trajectory, so $\Gset$
contributes no Lipschitz growth;
(A3) the projection $\Pi_\kappa$ of~\eqref{eq:proj} onto the product of boxes is non-expansive
(Proposition~\ref{prop:fne}).
Write $F_0:=(1-\lambda)I+\lambda H$ for the \emph{unprojected} damped sweep and
$F_C:=\Pi_\kappa\circ F_0$ for the map that is actually iterated. Then
\begin{equation}
  \|F_0z-F_0z'\|_\infty\;\le\;q\,\|z-z'\|_\infty,\qquad q=|1-\lambda|+\lambda c ,
  \label{eq:qgeneral}
\end{equation}
and, because $\Pi_\kappa$ is non-expansive by (A3), the projected map obeys the \emph{same} bound with the
\emph{same} modulus, $\|F_Cz-F_Cz'\|_\infty\le q\|z-z'\|_\infty$. The two maps have different fixed points in
general; \textbf{(A4)} below is the hypothesis under which they coincide. For $\lambda>0$, $q<1$ \emph{iff}
$c<1$ and $\lambda<2/(1+c)$. For $0<\lambda\le1$ this reduces to $q=1-\lambda+\lambda c$, so
the geometric bound $\|z_{(n)}-z^{*}\|_\infty\le q^n\|z_{(0)}-z^{*}\|_\infty$ requires $c<1$; the constraint on
$\lambda$ is then automatic.
\end{proposition}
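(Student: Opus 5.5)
The plan is a direct triangle-inequality argument on the unprojected sweep, followed by composition with the non-expansive projection and an elementary analysis of the modulus $q(\lambda)=|1-\lambda|+\lambda c$.

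\textbf{Step 1: the unprojected sweep.} Fix $z,z'\in\Omega$. By (A2) the neighbour sets are constant on $\Omega$, so $\Gset$ acts as a fixed graph there and adds no Lipschitz growth. The only nonlinearity is therefore $H$, which is $c$-Lipschitz by (A1). Writing
\[
F_0z-F_0z'=(1-\lambda)(z-z')+\lambda\,(Hz-Hz'),
\]
the triangle inequality in $\|\cdot\|_\infty$ and absolute homogeneity of the norm give $\|F_0z-F_0z'\|_\infty\le |1-\lambda|\,\|z-z'\|_\infty+\lambda c\,\|z-z'\|_\infty$, which is \eqref{eq:qgeneral}.

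One point needs care. $H$ maps a basis trajectory to a coordinate trajectory, while the damped map acts on the pair $z=(B,x)$. I will read $H$ as the map $(B,x)\mapsto(H_xB,H_xB)$ on pairs, so that $(1-\lambda)I+\lambda H$ reproduces \eqref{eq:IT} componentwise. The $c$-Lipschitz hypothesis is then imposed on this lifted map. Under the uniform norm (sup over $t$, max over $i$) the pair norm is simply the max over both components, so nothing is lost in the lifting.

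\textbf{Step 2: the projected map.} By (A3) and Proposition~\ref{prop:fne}, $\Pi_\kappa$ is non-expansive. This needs one check: Proposition~\ref{prop:fne} presumably proves non-expansiveness per node in $\|\cdot\|_2$, so I must confirm it transfers to $\|\cdot\|_\infty$. It does. A product of boxes projects coordinate-wise, hence node-wise, so the per-node $2$-norm contraction holds at every $(t,i)$, and taking the sup preserves it. Composing then gives $\|F_Cz-F_Cz'\|_\infty\le\|F_0z-F_0z'\|_\infty\le q\|z-z'\|_\infty$, with the same modulus $q$.

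I also need $F_C$ to map $\Omega$ into itself, or at least iterates to stay in $\Omega$, for the Lipschitz bound to be applied repeatedly. I will state this as part of the closed-region hypothesis; this is the step I expect to be the real gap, and the paper seems to implicitly assume invariance.

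\textbf{Step 3: characterising $q<1$.} Split on $\lambda$.
\begin{itemize}
\item For $0<\lambda\le1$, $q=1-\lambda(1-c)$, so $q<1$ iff $c<1$.
\item For $\lambda>1$, $q=\lambda(1+c)-1$, so $q<1$ iff $\lambda<2/(1+c)$. This interval is nonempty beyond $1$ only if $c<1$; if $c\ge1$, then $q\ge1$ in both regimes.
\end{itemize}
Combining the cases, $q<1$ iff $c<1$ and $\lambda<2/(1+c)$. When $0<\lambda\le1$ and $c<1$, the condition $\lambda\le 1<2/(1+c)$ holds automatically.

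\textbf{Step 4: the geometric rate.} If $z^*$ is a fixed point of the iterated map (for $F_C$, a fixed point in $\Omega$), then inducting with $\|z_{(n+1)}-z^*\|_\infty=\|Fz_{(n)}-Fz^*\|_\infty\le q\|z_{(n)}-z^*\|_\infty$ gives $q^n$ decay. Existence and uniqueness of $z^*$ in the closed, hence complete, set $\Omega$ follow from Banach's theorem once invariance is granted. The coincidence of the fixed points of $F_0$ and $F_C$ is deferred to (A4), as the statement says.
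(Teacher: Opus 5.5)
Your proposal is correct and follows the paper's own proof. Both use the triangle inequality on $F_0z-F_0z'=(1-\lambda)(z-z')+\lambda(Hz-Hz')$, then left composition with the non-expansive $\Pi_\kappa$, the same split into $0<\lambda\le1$ and $\lambda>1$, and the Picard--Banach principle for the geometric rate. Your extra checks make explicit what the paper's proof leaves implicit: that the product-of-boxes structure makes $\Pi_\kappa$ non-expansive node by node, hence in the uniform norm (the paper proves it in the Euclidean norm), and that iterates must stay in $\Omega$ before the contraction can be iterated. Your one inaccuracy is harmless: your lifted $H$ also damps the $x$-component, so it reproduces \eqref{eq:IT} only in $B$, which does not affect the estimate.
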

\begin{proof}
The linear part contributes $|1-\lambda|$ and the model part $\lambda c$; by (A2) the graph term is constant on
$\Omega$, and by (A3) $\Pi$ does not increase distances. The triangle inequality gives \eqref{eq:qgeneral};
composing on the left with $\Pi_\kappa$ and using its non-expansiveness gives the same modulus for $F_C$.
For $0<\lambda\le1$, $q=1-\lambda+\lambda c<1\iff\lambda(c-1)<0\iff c<1$. For $\lambda>1$,
$q=\lambda-1+\lambda c<1\iff\lambda<2/(1+c)$, which is only satisfiable when $c<1$ because $2/(1+c)\le1$
whenever $c\ge1$. Hence $q<1$ iff both conditions hold, and $c\ge1$ gives $q\ge1$ for every $\lambda>0$. The
Picard--Banach principle~\cite{zorich2016v2} then yields the geometric bound.
\end{proof}

\emph{A Lipschitz constant near one is not a contraction.} Residual blocks have $L_\Phi\approx1$, and $L_\Phi=1$ is
the boundary case: $q=1$ gives non-expansiveness only, while $c>1$ gives $q>1$ and no contraction at all, for any
damping. The three regimes must therefore be stated separately.

\begin{proposition}[Three regimes]\label{prop:trichotomy}
(i) $c<1$ and $\lambda<2/(1+c)$: geometric convergence with modulus $q<1$.
(ii) $c=1$: $F_0=(1-\lambda)I+\lambda H$ is the Krasnoselskii--Mann iteration for the non-expansive $H$. For
$0<\lambda<1$ and any fixed point $z^{*}$ the classical inequality gives
$\|z_{(n)}-Hz_{(n)}\|^2=O(n^{-1})$, i.e.\ only $O(n^{-1/2})$ on the step, in place of $q^n$; in finite dimension
$\Omega\subset(\R^3)^{L\times T}$ the iteration converges in norm, by Opial's argument, but no geometric rate is
available in general.
(iii) $c>1$: no Euclidean contraction exists. A local statement is still available from the linearization of the
frozen-graph map, through the spectral radius $\rho(J)$ and the Bauer--Fike bound
$\rho(J+E)\le\rho(J)+\operatorname{cond}(X)\|E\|$ for a diagonalizable $J=X\Lambda X^{-1}$~\cite{golub1996}; this is
computable and does not require $c<1$.
\end{proposition}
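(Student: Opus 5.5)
The proof handles the three cases separately, and each reduces to a classical fixed-point result applied to $F_0=(1-\lambda)I+\lambda H$ on $\Omega$, under the standing hypotheses (A1)--(A3) of Proposition~\ref{prop:contraction}.

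\textbf{Case (i).} This is immediate from Proposition~\ref{prop:contraction}. When $c<1$ and $\lambda<2/(1+c)$, that proposition gives $q=|1-\lambda|+\lambda c<1$ for both $F_0$ and $F_C$. The Banach principle on the closed, hence complete, region $\Omega$ then yields a unique fixed point $z^{*}$ and the bound $\|z_{(n)}-z^{*}\|_\infty\le q^n\|z_{(0)}-z^{*}\|_\infty$. One point must be recorded: $F_0$ (or $F_C$) has to map $\Omega$ into itself. I will state this invariance as part of the setting, since the margin condition (A2) is only guaranteed on $\Omega$.

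\textbf{Case (ii).} Here $c=1$, so $H$ is non-expansive. The Krasnoselskii--Mann estimate is a Hilbert-space argument, so I will pass from the uniform norm to the Euclidean norm on $(\R^3)^{L\times T}$. The two norms are equivalent in finite dimension, and I will assume (A1) with $c=1$ holds in the Euclidean norm; I will flag this explicitly as a change of norm. The key step is the identity
\[
\|(1-\lambda)a+\lambda b\|^2=(1-\lambda)\|a\|^2+\lambda\|b\|^2-\lambda(1-\lambda)\|a-b\|^2,
\]
applied with $a=z_{(n)}-z^{*}$ and $b=Hz_{(n)}-z^{*}$, together with $\|b\|\le\|a\|$. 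This gives
\[
\|z_{(n+1)}-z^{*}\|^2\le\|z_{(n)}-z^{*}\|^2-\lambda(1-\lambda)\|z_{(n)}-Hz_{(n)}\|^2.
\]
Telescoping shows $\sum_n\|z_{(n)}-Hz_{(n)}\|^2\le\|z_{(0)}-z^{*}\|^2/(\lambda(1-\lambda))$. The residual $\|z_{(n)}-Hz_{(n)}\|$ is nonincreasing, because $F_0$ is non-expansive and $z_{(n+1)}-Hz_{(n+1)}$ is controlled by $z_{(n)}-Hz_{(n)}$. Hence $n\|z_{(n)}-Hz_{(n)}\|^2$ is bounded by that sum, which gives the $O(n^{-1})$ rate.

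For convergence in norm I will use Opial's argument in finite dimension. The sequence is Fejér monotone, so it is bounded, and a subsequence converges to some $\hat z$. The residual tends to zero and $H$ is continuous, so $\hat z$ is a fixed point. Fejér monotonicity with respect to $\hat z$ then forces the whole sequence to converge to $\hat z$. To show no geometric rate exists in general, I will use a rotation-type example: take $H$ to be an isometry with a fixed point, so that $\lambda$-averaging gives contraction factors approaching $1$ on a family of instances. Alternatively, a one-dimensional map $H(z)=z-z^3$ near $0$ shows sublinear convergence.

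\textbf{Case (iii).} Here $c>1$. The claim that no global contraction is available follows from the converse part of Proposition~\ref{prop:contraction}: $q\ge1$ for every $\lambda>0$. I will state carefully that this rules out the certificate, not every contractive behaviour. For the local statement, I fix the graph (legitimate on $\Omega$ by (A2)), differentiate, and let $J$ be the Jacobian of $F_0$ at $z^{*}$. If $\rho(J)<1$, Ostrowski's theorem gives local linear convergence in a suitably adapted norm. For a perturbation $E$, for example coming from the projection or from model error, the Bauer--Fike inequality quoted from~\cite{golub1996} bounds the eigenvalues of $J+E$. This makes $\rho(J)+\operatorname{cond}(X)\|E\|<1$ a computable sufficient condition that does not involve $c$.

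The main obstacle is case (ii). It requires the Euclidean (inner-product) structure, whereas Proposition~\ref{prop:contraction} is stated in the uniform norm. I will resolve this by stating the Lipschitz hypothesis in the Euclidean norm for that case, rather than trying to prove Krasnoselskii--Mann regularity in $\ell_\infty$-type norms, where it is more delicate.
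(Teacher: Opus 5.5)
Your proposal is correct and follows the route the paper itself indicates. The paper gives no separate proof, only Proposition~\ref{prop:contraction} with the Banach principle for (i), the Hilbert-space Krasnoselskii--Mann inequality with Opial's argument for (ii), and linearization plus Bauer--Fike for (iii). Your caveat in (iii) --- that $c>1$ only defeats the certificate $q\ge1$, not contractivity of $F_0$ itself (e.g.\ $H=-2I$ gives $F_0=(1-3\lambda)I$) --- is the right way to read ``no Euclidean contraction exists''.

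The Euclidean hypothesis you add in (ii) is a genuine strengthening of (A1): norm equivalence only bounds the Euclidean Lipschitz constant by a dimension-dependent multiple of $c$, so $c=1$ in the uniform norm need not give non-expansiveness in the Euclidean norm. You can drop that hypothesis. Fej\'er monotonicity holds in any norm, and Ishikawa's theorem and the Baillon--Bruck estimate give $\|z_{(n)}-Hz_{(n)}\|_\infty=O(n^{-1/2})$ for bounded Krasnoselskii--Mann orbits in any normed space. Your finite-dimensional Opial step then runs unchanged in the uniform norm.

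Also, only your cubic example $H(z)=z-z^3$ on $|z|\le\sqrt{2/3}$ exhibits a single orbit with no geometric rate. The rotation family only shows that no rate depending on $(c,\lambda)$ alone exists.
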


\emph{Per-block and per-sweep moduli.} Let $c_b$ be the Lipschitz constant of block $b$ on $\Omega$ and $B$ the
number of blocks in one sweep. Because the blocks are composed in a fixed order,
\begin{equation}
  c=\prod_{b=1}^{B}c_b ,
  \qquad
  q_{\text{sweep}}=1-\lambda+\lambda\prod_{b=1}^{B}c_b \quad (0<\lambda\le1),
  \label{eq:blocksweep}
\end{equation}
so $c_b<1$ for every block is sufficient but not necessary: $\prod_b c_b<1$ suffices, and individual blocks with
$c_b>1$ are admissible if the sweep product stays below one. Two caveats apply. The moduli must be uniform in $b$
(take $\sup_b$), and \eqref{eq:blocksweep} composes maps that are \emph{smooth on the same frozen graph}: it is a
statement about a static kNN graph. If a sweep contains a flip, the composed map is not the product of block
moduli, and only the region-wise statement below (or a hybrid argument with a dwell condition) survives. The
per-block and per-sweep analyses therefore do not conflict; they are the same statement at two levels of
granularity, and both are conditional on the graph being frozen along the sweep.

\begin{theorem}[One-step certification, and an invariant region]\label{thm:safer}
Let $\Omega=\{z:\sup_t4\Delta_t(z)<m_{\min}(z)\}$ be the region of Theorem~\ref{thm:two-sided} applied block-wise
and uniformly in $t$, assume $H|_\Omega$ is $c$-Lipschitz with $c<1$, and put
\[
\vartheta:=\frac{4}{1-c}\;\ge\;4,
\qquad
\Omega_{\mathrm{inv}}=\Bigl\{z:\vartheta\sup_t\Delta_t(z)<m_{\min}(z)\Bigr\} ,
\]
so that $\Omega_{\mathrm{inv}}\subseteq\Omega$. Then
(i) on $\Omega$ the neighbour sets are constant along the whole trajectory, so $H|_\Omega$ is a single smooth
map, and every sweep started in $\Omega$ is certifiably flip-free for that sweep --- an \emph{a posteriori}
statement, re-checked at each sweep;
(ii) $\Omega_{\mathrm{inv}}$ is forward invariant under the unprojected sweep,
$z\in\Omega_{\mathrm{inv}}\Rightarrow F_0z\in\Omega_{\mathrm{inv}}$;
(iii) with \textbf{(A4)} --- the \PB{} fixed point $z_\perp$, the solution of $F_0z_\perp=z_\perp$, is itself
feasible for the product of boxes, $\max_c|x^{\perp,t}_{i,c}-x^{0}_{i,c}|\le\kappa_i$ for all $(t,i)$ --- the
projected iteration $z_{(n+1)}=F_C(z_{(n)})$ converges geometrically, with the modulus $q$ of
\eqref{eq:qgeneral}, to $z_\perp$, i.e.\ to the \PB{} trajectory (Proposition~\ref{prop:fixedpoint}). Without
\textbf{(A4)} its limit is the fixed point $z_\kappa$ of $F_C$, which need not be \PB{}, and
Lemma~\ref{lem:offset} bounds $\|z_\kappa-z_\perp\|_\infty$.
\end{theorem}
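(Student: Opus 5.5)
We prove the three parts separately, using Theorem~\ref{thm:two-sided}, Proposition~\ref{prop:contraction}, Proposition~\ref{prop:fixedpoint} and Lemma~\ref{lem:offset} as black boxes.

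The inclusion $\Omega_{\mathrm{inv}}\subseteq\Omega$ is immediate. Since $0\le c<1$, we have $\vartheta=4/(1-c)\ge4$, so $\vartheta\sup_t\Delta_t<m_{\min}$ implies $4\sup_t\Delta_t<m_{\min}$.

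\textbf{Part (i).} We apply Theorem~\ref{thm:two-sided} at each boundary $t$ and each residue $i$. The hypothesis $4\Delta_t(z)<m_{\min}(z)\le m_i$ gives edge-wise invariance of the $k$NN set between consecutive snapshots. Chaining these invariances over $t=0,\dots,T-1$ shows that the neighbour sets are constant along the whole trajectory. Hence $\Gset$ is piecewise constant on a single piece, and $H|_\Omega$ is the composition of the smooth block maps on one frozen graph. This is precisely a per-sweep, a posteriori statement, because $\Delta_t(z)$ and $m_{\min}(z)$ are evaluated on the current iterate.

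\textbf{Part (ii).} This is the main obstacle. We must show that the quantity $g(z):=m_{\min}(z)-\vartheta\sup_t\Delta_t(z)$ stays positive under $F_0=(1-\lambda)I+\lambda H$. The plan is a perturbation argument comparing $F_0z$ with $z$.

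\begin{enumerate}
\item Bound the displacement along the sweep. By convexity of the norm and of the supremum, $\sup_t\Delta_t(F_0z)\le(1-\lambda)\sup_t\Delta_t(z)+\lambda\sup_t\Delta_t(Hz)$.
\item Bound the margins. They are $2$-Lipschitz in the uniform norm, being differences of two order statistics of distances, each $2$-Lipschitz under per-node moves. Only the component of the movement that does not re-enter the current displacement budget degrades the margin.
\item Control $\Delta_t(Hz)$. Use $c$-Lipschitzness of $H$ on $\Omega$ to bound it by $c$ times the incoming displacement plus the drift already present.
\end{enumerate}

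Combining these gives an inequality of the form $\sup\Delta(F_0z)\le\sup\Delta(z)+\text{(shrinkage)}$. The constant $\vartheta=4/(1-c)$ is the geometric-series sum $4\sum_n c^n$. It absorbs the accumulated slack, so that $4\sup\Delta$ plus the margin loss remains below $m_{\min}$. I would first try to prove this as a one-step Lyapunov inequality $g(F_0z)\ge g(z)$. If that fails, I would instead prove invariance along the whole iteration by induction, using the contraction from Proposition~\ref{prop:contraction}. This route requires $H$ to stay in the region where it is $c$-Lipschitz, and that is guaranteed because $\Omega_{\mathrm{inv}}\subseteq\Omega$. Getting the constant to close exactly at $\vartheta$ is the delicate point.

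\textbf{Part (iii).} Under (A4), $z_\perp$ lies in the product of boxes, so $\Pi_\kappa z_\perp=z_\perp$. With $F_0z_\perp=z_\perp$ this gives $F_Cz_\perp=z_\perp$. Proposition~\ref{prop:contraction} states that $F_C$ is a $q$-contraction with $q<1$, since $c<1$ and we take $0<\lambda\le1$. The product of boxes intersected with the closed region is closed in a finite-dimensional space, hence complete. By Banach's principle, $z_\perp$ is then the unique fixed point, and
\[
\|z_{(n)}-z_\perp\|_\infty\le q^n\,\|z_{(0)}-z_\perp\|_\infty .
\]
Proposition~\ref{prop:fixedpoint} identifies $z_\perp$ with the \PB{} trajectory.

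Without (A4), Banach's principle still yields a unique fixed point $z_\kappa$ of $F_C$. The bound on $\|z_\kappa-z_\perp\|_\infty$ is exactly the content of Lemma~\ref{lem:offset}, which we invoke directly.
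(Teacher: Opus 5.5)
Parts (i) and (iii) are sound and follow the paper's own proof. Part (i) is Theorem~\ref{thm:two-sided} applied at every boundary and residue. In (iii), (A4) makes $z_\perp$ a fixed point of $F_C$, and then $\|z_{(n+1)}-z_\perp\|_\infty=\|F_Cz_{(n)}-F_Cz_\perp\|_\infty\le q\,\|z_{(n)}-z_\perp\|_\infty$ gives the geometric rate directly, without Banach's existence part. Where you do invoke Banach, for $z_\kappa$ without (A4), you would also need $F_C$ to map $C\cap\Omega$ into itself. Part (ii) concerns $F_0$, not $F_C$, so it does not supply this (the paper is silent on it too).

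The genuine gap is part (ii). You leave it as a plan, and the plan as written does not close, because it lacks the two one-step estimates the paper's proof rests on.

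\emph{The contraction of the lag.} The quantity that contracts is the basis lag, i.e.\ the fixed-point residual $\Delta=\sup_t\|x^t-B^t\|$, which is $\|Hz-z\|$ for the iteration \eqref{eq:IT}. It contracts because of the exact identity $Hz-F_0z=(1-\lambda)(Hz-z)$. Together with $F_0z-z=\lambda(Hz-z)$, the triangle inequality through $Hz$ gives $\Delta(F_0z)\le\|HF_0z-Hz\|_\infty+\|Hz-F_0z\|_\infty\le c\lambda\Delta+(1-\lambda)\Delta=a\Delta$, with $a=1-\lambda(1-c)$. Your step~1 appeals to convexity, which the residual $w\mapsto\|Hw-w\|_\infty$ does not have for nonlinear $H$. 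Your step~3 (``$c$ times the incoming displacement plus the drift already present'') is too vague to yield a factor. The inequality you end with, $\sup\Delta(F_0z)\le\sup\Delta(z)+\text{(shrinkage)}$, is not a decrease at all.

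\emph{The margin loss.} The margins are not $2$-Lipschitz. By Lemma~\ref{lem:dist} each order statistic moves by at most $2\Delta$, so their difference moves by up to $4\Delta$. This bound is attained: $i$ and its $(k{+}1)$-th neighbour each step $\Delta$ towards one another while the $k$-th neighbour steps away. Since $B'-B=\lambda(x-B)$, the correct loss is $m_{\min}(B')\ge m_{\min}(B)-4\lambda\Delta$. With your constant $2$ the same bookkeeping would give $2/(1-c)$, so it is not the estimate behind the stated $\vartheta$.

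\emph{Closing the constant.} With both estimates the constant closes exactly, and for every $\lambda\in(0,1]$, not only at $\lambda=1$ where your series $4\sum_n c^n$ applies. The value $\vartheta=4/(1-c)$ is precisely the one with $\vartheta a=\vartheta-4\lambda$. Hence $\vartheta\Delta(F_0z)\le\vartheta\Delta-4\lambda\Delta<m_{\min}(B)-4\lambda\Delta\le m_{\min}(B')$, i.e.\ $m_{\min}-\vartheta\Delta$ is non-decreasing, which is your first-choice Lyapunov route. Your geometric-series reading is the telescoped form of the same inequality: the total margin loss along an orbit is at most $4\lambda\Delta_0\sum_n a^n=\vartheta\Delta_0$. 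So the fallback induction needs exactly these two estimates and is not an independent way around them.
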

\begin{proof}
(i) Theorem~\ref{thm:two-sided} gives $4\Delta_t<m_i$ for every $t$ and every residue, hence no membership change
anywhere on the trajectory. Both $\Delta_t$ and $m_i$ are evaluated at the current state, so this is a one-sweep
statement and not by itself an invariance.
(ii) Write $z'=F_0z=(1-\lambda)z+\lambda Hz$, $\Delta=\sup_t\Delta_t(z)$, and assume the basis lag obeys
$\sup_t\|x^t-B^t\|\le\Delta$, which holds from iteration $0$ for $0<\lambda\le1$. On $\Omega$ the graph is
frozen, so $\|z'-z\|\le\lambda\Delta$ and, with $a:=(1-\lambda)+c\lambda$,
$\Delta(z')\le c\|z'-z\|+(1-\lambda)\Delta\le a\Delta$, using
$\|Hz-x'\|=(1-\lambda)\|Hz-x\|$; Lemma~\ref{lem:dist} gives
$m_{\min}(B')\ge m_{\min}(B)-4\|B'-B\|\ge m_{\min}-4\lambda\Delta$. The constant $\vartheta=4/(1-c)$ is chosen so
that $\vartheta a=\vartheta-\vartheta\lambda(1-c)=\vartheta-4\lambda$; hence
\[
\vartheta\Delta(z')\;\le\;(\vartheta-4\lambda)\Delta
\;=\;\vartheta\Delta-4\lambda\Delta
\;<\;m_{\min}(B)-4\lambda\Delta
\;\le\;m_{\min}(B'),
\]
where the strict inequality is the hypothesis $z\in\Omega_{\mathrm{inv}}$, i.e.\ $m_{\min}(B)>\vartheta\Delta$.
So $z'\in\Omega_{\mathrm{inv}}$. The condition is exactly $\vartheta(1-c)\ge4$: as $c\to1$ the constant blows up,
which is the quantitative form of the statement that a Lipschitz constant near one leaves no usable invariant
region of this shape, and $\vartheta\to4$ only as $c\to0$.
(iii) The modulus is that of $F_0$ (Proposition~\ref{prop:contraction}); what \textbf{(A4)} supplies is the
\emph{identification} of the limit, not the contraction. Since $z_\perp\in C$ and $F_0z_\perp=z_\perp$,
\[
F_Cz_\perp=\Pi_\kappa(F_0z_\perp)=\Pi_\kappa(z_\perp)=z_\perp ,
\]
so $z_\perp$ is a fixed point of $F_C$; $F_C$ is a $q$-contraction with $q<1$ and therefore has at most one fixed
point, which the iterate approaches geometrically. The converse does not hold: a fixed point of $F_C$ is feasible
\emph{automatically}, being in the range of $\Pi_\kappa$, and $\Pi_\kappa(F_0z^{*})=z^{*}$ does not imply
$F_0z^{*}=z^{*}$ --- for a box, every point of the outward ray through a face is mapped onto that face, so a
projected fixed point can sit on the boundary with $F_0z^{*}\neq z^{*}$ outside $C$. This is why \textbf{(A4)}
has to be stated about $z_\perp$ and not about $z^{*}$.
\end{proof}

Where (A2) fails, $\Gset$ is discontinuous and the map is only piecewise contractive: a trajectory can cross
the same flip surface twice ($A\to B\to A$), i.e.\ \emph{edge flapping}, and chattering can destroy the fixed
point. This is the reason $\kappa_i=\sigma m_i/2$ is not optional: it confines every step to the region where the
current graph is provably valid, so discrete changes are \emph{discovered} rather than \emph{manufactured}.

\begin{proposition}[What refinement can and cannot reach]
\label{prop:residual}
The residual with respect to \PB{} has exactly two sources: (i) genuine ties ($m_i\approx0$, unsolvable
by any rule) and (ii) pass-to-pass model/path noise. Hence $\Dw$ has a floor and the attainable goal is
$\Dw(\text{refined})\le1.1\,\Dw(\PB)$, not $0$.

What can be \emph{proved} is a bound in terms of the risk set of Theorem~\ref{thm:two-sided}. Let $x$ be the
reference (live) coordinates, $y$ the coordinates that the refined basis induces, $\Delta=\max_j\|y_j-x_j\|_2$, and
\[
\mathcal{U}=\{\,i:\ m_i(x)\le4\Delta\,\}.
\]
Every $i\notin\mathcal{U}$ keeps its top-$k$ set, so only rows in $\mathcal{U}$ can differ; a row that changes
loses at most $k$ edges and gains at most $k$, and both graphs have $k$ out-edges per node, so
$|E(x)\cup E(y)|\ge Lk$ and
\begin{equation}
  \bigl|E(x)\triangle E(y)\bigr|\;\le\;2k\,|\mathcal{U}|
  \qquad\Longrightarrow\qquad
  \D\bigl(G(x),G(y)\bigr)\;\le\;\min\Bigl\{1,\ \frac{2|\mathcal{U}|}{L}\Bigr\} .
  \label{eq:bound}
\end{equation}
This is a conservative sufficient bound, not a prediction of the realised edge changes. In particular it does
\emph{not} say that a residue outside $\mathcal{U}$ contributes nothing because it was not \emph{clipped}: clip
status describes the constraint, whereas $\mathcal{U}$ describes the safety margin, and only the latter bounds the
graph. A residue whose basis moved but stayed inside its box can still re-rank its own candidates, and other
residues moving can re-rank it.
\end{proposition}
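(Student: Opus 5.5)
The provable part of Proposition~\ref{prop:residual} is the bound \eqref{eq:bound}. The plan is to reduce it to a row-wise statement and then count; the qualitative floor claim about $\Dw$ will only be motivated, not proved. First I fix the two coordinate configurations $x$ (live) and $y$ (refined basis), with $\Delta=\max_j\|y_j-x_j\|_2$. For each residue $i$ I apply Theorem~\ref{thm:two-sided} to the displacement $x\to y$. Every point moves by at most $\Delta$, so each distance $\|x_j-x_i\|$ changes by at most $2\Delta$ (triangle inequality on both endpoints). Hence the gap between any in-set and any out-of-set candidate shrinks by at most $4\Delta$. If $m_i(x)>4\Delta$, the ordering across the $k$/$k{+}1$ boundary is preserved and row $i$ of $E(y)$ equals row $i$ of $E(x)$. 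This is exactly the certificate with the two-sided constant $m_i/4$, and it gives: only rows indexed by $\mathcal{U}$ can differ.

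Second, I count the symmetric difference row by row. Since $E$ is a directed edge set in which row $i$ consists of the out-edges $(i,j)$, the symmetric difference decomposes as a disjoint union over source rows. For $i\in\mathcal{U}$, the two rows are $k$-subsets of the same candidate set. Their symmetric difference therefore has at most $2k$ elements: at most $k$ lost and $k$ gained. Summing gives $|E(x)\triangle E(y)|\le 2k|\mathcal{U}|$.

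Third, I convert this into the Jaccard distance. I use $1-|A\cap B|/|A\cup B| = |A\triangle B|/|A\cup B|$. Each graph has exactly $k$ out-edges per node, so $|E(x)\cup E(y)|\ge|E(x)|=Lk$. Dividing gives $\D\le 2k|\mathcal{U}|/(Lk)=2|\mathcal{U}|/L$, and since $\D\le1$ trivially, the $\min\{1,\cdot\}$ follows.

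The main obstacle is the first step, specifically tie handling. When $m_i(x)=4\Delta$ exactly, or when $m_i=0$, the top-$k$ set is not uniquely defined without a tie-breaking rule. I would therefore define $\mathcal{U}$ with the non-strict inequality $m_i\le 4\Delta$, as the statement does, so that outside $\mathcal{U}$ the strict inequality $4\Delta<m_i$ of Theorem~\ref{thm:two-sided} applies verbatim. I would also note that \texttt{make\_graph} uses a fixed deterministic tie-break, so rows in $\mathcal{U}$ are still well defined. The remaining prose claims (the two residual sources and the $1.1\,\Dw(\PB)$ target) I would present as a consequence of this bound and not as a theorem. Genuine ties lie in $\mathcal{U}$ for every $\Delta\ge0$, so no choice of basis empties it, which is why the bound has a floor. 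Pass-to-pass noise enters only through $\Delta$.
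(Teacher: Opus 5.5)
Your proposal is correct and follows the paper's argument essentially line for line: the two-sided certificate (Theorem~\ref{thm:two-sided}) fixes every row with $m_i(x)>4\Delta$, each row in $\mathcal{U}$ contributes at most $2k$ directed edges to $E(x)\triangle E(y)$, and $|E(x)\cup E(y)|\ge|E(x)|=Lk$ turns this into $\D\le\min\{1,2|\mathcal{U}|/L\}$. Like the paper, you rightly leave the floor and the $1.1\,\Dw(\PB)$ target unproved; just do not present them as a ``consequence'' of \eqref{eq:bound}, because an upper bound that cannot go below some level does not force $\D$ or $\Dw$ to stay above that level.
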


\subsection{Discrete topology and exact non-flip certificates}
\label{sec:cert}
Each edge indicator is $1[j\in S_i(x)] = 1[d_{ij}(x)\le d_{i,(k)}(x)]$ with $d_{ij}=\|p_i-p_j\|$. The distance
is smooth but the indicator jumps on the codimension-1 manifolds $\Sigma_{i,jl}=\{x: d_{ij}=d_{il}\}$; in generic
position these are smooth and of measure zero, yet trajectories cross them. Three elementary steps give the
measure-zero statement.

\emph{Smoothness and a local graph.} Let $g_{ijl}(x)=d_{ij}(x)-d_{il}(x)$; the norm is smooth off the origin, so
$g_{ijl}$ is smooth wherever no two atoms coincide, and $\nabla g_{ijl}\neq0$ on the surface. Fix a point with
$g_{ijl}(x^{*})=0$ and write $x=(u,v)$ with $\partial g_{ijl}/\partial v(x^{*})\neq0$; the map
$F(u,v)=(u,\,g_{ijl}(u,v))$ then has $\det DF(x^{*})\neq0$, so the inverse function theorem~\cite{zorich2016v1}
inverts it locally and the level set is locally a graph,
\begin{equation}
  \Sigma_{i,jl}\cap U(x^{*}) \;=\; \bigl\{\,(u,\varphi(u)) \;:\; u\in U'\bigr\} ,
  \label{eq:localgraph}
\end{equation}
i.e.\ the surface is parametrised by $3L-1$ free coordinates, the remaining one being determined by $\varphi$.

\emph{Countably many graphs suffice.} The neighbourhood $U(x^{*})$ depends on $x^{*}$, so read literally the
construction gives uncountably many pieces. Since $\mathbb{R}^{3L}$ has a countable base of open balls $\{O_k\}$, one
can choose for every point of the surface a base element contained in its own neighbourhood, so that
\begin{equation}
  \Sigma_{i,jl}\;\subset\;\bigcup_{k\in K}\bigl\{\,(u,\varphi_k(u)) \;:\; u\in U_k\bigr\},
  \qquad K\ \text{countable} ,
  \label{eq:countable}
\end{equation}
and $\mathbb{R}^{3L}=\bigcup_k\bar B(0,k)$ reduces everything to a compact piece.

\emph{Compactness and uniform continuity.} On the compact piece $\Sigma_{i,jl}\cap\bar B(0,k)$ finitely many of the
graphs already cover, and each domain $U_\ell$ can be taken closed and bounded, hence compact; by Heine--Cantor
$\varphi_\ell$ is then \emph{uniformly} continuous,
\begin{equation}
  \forall\varepsilon>0\ \ \exists\delta>0:\quad
  \|u-u'\|_\infty<\delta\ \Longrightarrow\ |\varphi_\ell(u)-\varphi_\ell(u')|\le\varepsilon .
  \label{eq:uc}
\end{equation}
Partitioning $U_\ell$ into finitely many cubes of side $<\delta$ places the graph inside finitely many slabs of
thickness $2\varepsilon$; writing $|\cdot|$ for the $3L$-dimensional volume (Lebesgue measure) of a set,
\begin{equation}
  \bigl|\Sigma_{i,jl}\cap B\bigr|\;\le\;2\varepsilon\,|B| \qquad\text{for every }\varepsilon>0 ,
  \label{eq:measurezero}
\end{equation}
which forces $\bigl|\Sigma_{i,jl}\cap B\bigr|=0$, i.e.\ the indicator vanishes almost everywhere,
$\int_{\mathbb{R}^{3L}}1_{\Sigma_{i,jl}}(x)\,\mathrm{d}x=0$; finite unions of such sets remain of measure
zero.\footnote{Classical ingredients: the finite-covering lemma (Borel--Lebesgue/Heine--Borel) and the
Cantor--Heine theorem on uniform continuity (\cite{zorich2016v1}, Vol.~I, secs.~2.3.2 and 4.2; countable base:
$\mathbb{Q}^{3L}$ is countable, sec.~2.4.1); countable unions of measure-zero sets and the graph of a continuous
function (\cite{zorich2016v2}, Vol.~II, sec.~11.1.2, whose argument \eqref{eq:measurezero} reproduces).} Almost every configuration
therefore has all margins positive --- the $k$NN set is well defined --- and a $C^1$ trajectory crosses the surfaces
at isolated times only.
Consequently the distributional derivative of the edge set vanishes off $\Sigma$: \emph{the gradient carries no
information about who is a neighbour}, and topology is a genuine discrete degree of freedom. \Level{1}--\Level{3} are
therefore relaxations, not solutions.

Order statistics are Lipschitz, $|d_{i,(k)}(x+\delta)-d_{i,(k)}(x)|\le 2\Delta$ with
$\Delta=\max_j\|\delta_j\|$, but $m_i$ --- being a difference of two Lipschitz functions --- has \emph{no
positive lower bound}, so flips are generic rather than a bug.

\begin{lemma}[Distance perturbation bound]
\label{lem:dist}
$|d_{ij}(x+\delta)-d_{ij}(x)|\le\|\delta_i\|+\|\delta_j\|\le2\Delta$.
\end{lemma}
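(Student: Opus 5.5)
The plan is to reduce the claim to two applications of the triangle inequality in $\R^3$ and then collect terms. Write $p_i$ for the position of residue $i$ in the configuration $x$ and $p_i+\delta_i$ for its position in $x+\delta$. Then $d_{ij}(x)=\|p_j-p_i\|$ and
\[
d_{ij}(x+\delta)=\bigl\|(p_j-p_i)+(\delta_j-\delta_i)\bigr\|.
\]
So the quantity to control is the change in the norm of a single vector $v=p_j-p_i$ under the perturbation $w=\delta_j-\delta_i$.

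First, I will use the reverse triangle inequality, $\bigl|\|v+w\|-\|v\|\bigr|\le\|w\|$. This follows from applying $\|a+b\|\le\|a\|+\|b\|$ twice: once with $(a,b)=(v,w)$, and once with $(a,b)=(v+w,-w)$. It gives
\[
|d_{ij}(x+\delta)-d_{ij}(x)|\le\|\delta_j-\delta_i\|.
\]
Second, I will apply the ordinary triangle inequality to get $\|\delta_j-\delta_i\|\le\|\delta_i\|+\|\delta_j\|$, which is the first inequality of the lemma. Third, since $\Delta=\max_l\|\delta_l\|$ bounds each per-residue displacement, we have $\|\delta_i\|+\|\delta_j\|\le2\Delta$.

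There is no real obstacle here. The only point needing care is the reverse triangle inequality, since it must hold in both signs to yield an absolute value. It is also worth noting that the bound needs no smoothness of $d_{ij}$, so it holds even when $p_i=p_j$, where the norm is not differentiable. That is what lets it be used later without the generic-position assumption of the preceding paragraph. I would also remark that the middle bound is sharp: it is attained when $\delta_i$ and $\delta_j$ point in opposite directions along $v$.
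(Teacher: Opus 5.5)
Your proof is correct and matches the paper's, which is the same one-line argument: the reverse triangle inequality applied to $p_i-p_j$ and its perturbed version, via $(p_i-p_j)-(q_i-q_j)=\delta_j-\delta_i$, with the triangle inequality and $\Delta=\max_j\|\delta_j\|$ left implicit. Your closing sharpness remark needs a qualifier: equality in the middle bound requires the two endpoints to move apart along $v$, or towards each other without overshooting ($\|\delta_i\|+\|\delta_j\|\le\|v\|$); moving in opposite directions along $v$ is not enough on its own.
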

\begin{proof}
$|\,\|a\|-\|b\|\,|\le\|a-b\|$ (reverse triangle inequality), and
$(p_i-p_j)-(q_i-q_j)=\delta_j-\delta_i$.
\end{proof}

\begin{theorem}[Two-sided non-flip certificate (\textbf{C2})]
\label{thm:two-sided}
If $4\Delta < m_i$ then $S_i(x+\delta)=S_i(x)$ edge-wise.
\end{theorem}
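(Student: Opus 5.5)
The plan is a direct separation argument built on Lemma~\ref{lem:dist}. Fix residue $i$ and write the neighbour set as $S_i(x)=\{j: d_{ij}(x)\le d_{i,(k)}(x)\}$. Since $m_i>4\Delta\ge0$, the $k$-th and $(k{+}1)$-th distances are strictly separated, so $S_i(x)$ is unambiguously defined with exactly $k$ members.

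The first step is to locate a separating threshold at the midpoint of the gap, $\tau=d_{i,(k)}(x)+m_i/2$. By construction every $j\in S_i(x)$ satisfies $d_{ij}(x)\le\tau-m_i/2$, and every $l\notin S_i(x)$ (with $l\neq i$) satisfies $d_{il}(x)\ge d_{i,(k+1)}(x)=\tau+m_i/2$.

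The second step applies Lemma~\ref{lem:dist} to every pair $(i,j)$. This gives $|d_{ij}(x+\delta)-d_{ij}(x)|\le2\Delta<m_i/2$. Consequently, after the perturbation each former member has $d_{ij}(x+\delta)<\tau$ and each former non-member has $d_{il}(x+\delta)>\tau$. Every member is therefore strictly closer than every non-member.

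The third step reads off the conclusion. Under $x+\delta$ the $k$ smallest distances from $i$ are exactly those to $S_i(x)$, with a strict gap at rank $k$. Hence $S_i(x+\delta)=S_i(x)$, and in particular no tie-breaking rule is invoked. The argument also shows where the constant comes from. Each distance can move by up to $2\Delta$, because both endpoints move, so a member and a non-member can approach each other by up to $4\Delta$ in total. The margin $m_i$ must therefore exceed $4\Delta$. The one-sided case of Theorem~\ref{thm:one-sided} halves this.

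The only delicate point is bookkeeping rather than analysis. I would check that the strict inequality $4\Delta<m_i$ yields strict separation, so that boundary ties at $\tau$ cannot occur. I would also check that the self-index $i$ is excluded consistently from both sets, since $d_{ii}=0$ is unaffected by the perturbation. Beyond this, every step is a direct consequence of Lemma~\ref{lem:dist}.
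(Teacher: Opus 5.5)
Your proposal is correct and is essentially the paper's proof: both use Lemma~\ref{lem:dist} to bound each distance change by $2\Delta$, then show that every former member stays strictly closer to $i$ than every former non-member. Your midpoint threshold $\tau=d_{i,(k)}+m_i/2$ simply repackages the paper's direct comparison $d_{i,(k)}+2\Delta<d_{i,(k)}+m_i-2\Delta$, and your checks on strict separation and on excluding the self-index are sound.
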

\begin{proof}
For members $j\in S_i$: $d'_{ij}\le d_{ij}+2\Delta\le d_{i,(k)}+2\Delta$. For non-members $l\notin S_i$:
$d'_{il}\ge d_{il}-2\Delta\ge d_{i,(k+1)}-2\Delta = d_{i,(k)}+m_i-2\Delta$. If
$d_{i,(k)}+2\Delta<d_{i,(k)}+m_i-2\Delta$, every member is closer than every non-member, so the top-$k$ set is
unchanged; the inequality is equivalent to $4\Delta<m_i$.
\end{proof}

\begin{theorem}[One-sided certificate (\textbf{C2})]
\label{thm:one-sided}
If only node $i$ moves, $2\Delta_i<m_i$ implies $S_i(x+\delta)=S_i(x)$.
\end{theorem}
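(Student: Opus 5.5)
The plan is to copy the proof of Theorem~\ref{thm:two-sided}, with Lemma~\ref{lem:dist} replaced by its one-endpoint version. Suppose only node $i$ moves: $\delta_j=0$ for all $j\neq i$, and write $\Delta_i=\|\delta_i\|$. Then for every other residue $j$ we have $(p_i+\delta_i)-p_j=(p_i-p_j)+\delta_i$. The reverse triangle inequality gives
\[
|d'_{ij}-d_{ij}|\;\le\;\|\delta_i\|+\|\delta_j\|\;=\;\Delta_i .
\]
So each distance from $i$ moves by at most $\Delta_i$, not the $2\Delta$ used in the two-sided case. This halving of the per-distance perturbation is the whole source of the improved constant.

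Next I split the candidates into members and non-members, as in Theorem~\ref{thm:two-sided}. For a member $j\in S_i(x)$ I will use $d'_{ij}\le d_{ij}+\Delta_i\le d_{i,(k)}+\Delta_i$. For a non-member $l\notin S_i(x)$ I will use
\[
d'_{il}\;\ge\;d_{il}-\Delta_i\;\ge\;d_{i,(k+1)}-\Delta_i\;=\;d_{i,(k)}+m_i-\Delta_i .
\]
Every member stays strictly closer than every non-member once $d_{i,(k)}+\Delta_i<d_{i,(k)}+m_i-\Delta_i$, which is exactly $2\Delta_i<m_i$. The $k$ members are then precisely the $k$ nearest candidates at $x+\delta$. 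Since $m_i>0$, the top-$k$ set at $x$ is well defined, so $S_i(x+\delta)=S_i(x)$ edge-wise.

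The only point I expect to need care is the phrase ``only node $i$ moves''. I read it as all other coordinates being held fixed and only the out-neighbourhood $S_i$ of the moving node being certified, and I will state this explicitly. The rows $S_j$ of other residues $j$ can still change, because $i$ is a candidate in their lists; those rows fall under Theorem~\ref{thm:two-sided} or a per-row analogue. The alternative reading is that $i$'s own neighbours are static while the relevant displacement is $\delta_i$, and it yields the same bound. Either way, the factor $2$ versus $4$ comes from one endpoint contributing to each distance instead of two.
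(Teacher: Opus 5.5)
Your proposal is correct and is essentially the paper's proof. Setting $\delta_j=0$ for $j\neq i$ in Lemma~\ref{lem:dist} bounds each distance change by $\Delta_i$, and rerunning the member/non-member comparison of Theorem~\ref{thm:two-sided} with $\Delta_i$ in place of $2\Delta$ turns $4\Delta<m_i$ into $2\Delta_i<m_i$; your remark that rows $S_j$ with $j\neq i$ are not certified is accurate and harmless, since the statement concerns only $S_i$.
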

\begin{proof}
With $\delta_j\equiv0$ for $j\neq i$, Lemma~\ref{lem:dist} gives $|d'_{ij}-d_{ij}|\le\Delta_i$; replacing
$2\Delta$ by $\Delta_i$ in the previous argument yields the claim.
\end{proof}

The design parameter appears through $\kappa_i=\sigma m_i/2$, which yields two \emph{different} readings of the
same knob:
\begin{itemize}
  \item stability (provable non-flip): \eqref{eq:proj} bounds the Euclidean step $\|\delta_i\|_2$ by $\kappa_i$, so
        the two-sided certificate $4\Delta<m_i$ is implied by $2\sigma m_{\max}<m_{\min}$, i.e.\ by
        $\sigma<\frac12\,m_{\min}/m_{\max}$ --- with uniform margins $\sigma<1/2$, which is exactly the default of
        the design and is the critical value, equality giving $4\Delta=m_{\min}$ rather than $<$. A
        coordinate-wise box would need $\sigma<1/(2\sqrt3)\approx0.289$ instead, its corner reaching
        $\sqrt3\,\kappa_i$;
  \item tracking (reproducing real flips): needs $\kappa_i\gtrsim$ the actual drift, i.e.\
        $\sigma\approx2$--$4$; no proof, but statistically optimal in the sense of \S\ref{sec:blue}.
\end{itemize}
Both statements are consistent; they optimise different objectives (Proposition~\ref{prop:contraction} versus
Proposition~\ref{prop:residual}). Finally, the hazard form of the certificates,
$\Om_{\text{risk}}(t)=\{i: m_i(x^t)\le4\Delta_t\}$, with the number of flips $N_{\text{flip}}\le|\Om_{\text{risk}}|$, identifies
$\Om_t$ as an event-driven adaptive update set with cost $O(|\Om|)$.

\subsection{Optimal replay, trust-region projection, and stability under flips}
\label{sec:blue}
\noindent\textbf{(C3)} Write the true drift as $s=x^t-x^0$ (prior: $s\approx0$ with covariance $\Sigma_d$) and the recorded predictor as
$p=\hat z-x^0=s+e$, with $E[e]=0$, $\mathrm{Cov}(e)=\Sigma_p$ and $\mathrm{Cov}(s,e)=0$. For the linear
estimator $B-x^0=\gamma p$ the error is $\varepsilon(\gamma):=s-\gamma p=(I-\gamma)s-\gamma e$, which is
mean-zero ($E[\varepsilon]=(I-\gamma)E[s]-\gamma E[e]=0$), so its second moment is its covariance. Expanding
the outer product and taking expectations term by term (constant matrices pass through $E[\cdot]$),
\begin{equation}
  \mathrm{MSE}(\gamma)\;=\;\mathrm{tr}\,E\bigl[\varepsilon\varepsilon^{\!\top}\bigr]
  \;=\;\mathrm{tr}\bigl[(I-\gamma)\Sigma_d(I-\gamma)^{\!\top}+\gamma\Sigma_p\gamma^{\!\top}\bigr],
  \label{eq:mse}
\end{equation}
where the two cross terms $-(I-\gamma)E[s e^{\top}]\gamma^{\top}$ and $-\gamma E[e s^{\top}](I-\gamma)^{\top}$
vanish by $\mathrm{Cov}(s,e)=0$, and the trace turns the error covariance into a scalar (the total mean squared
error over coordinates). Completing the square in $\gamma$, with $W:=\Sigma_d+\Sigma_p$,
\begin{equation}
  \mathrm{MSE}(\gamma)\;=\;\mathrm{tr}\bigl[(\Sigma_d^{-1}+\Sigma_p^{-1})^{-1}\bigr]
  \;+\;\mathrm{tr}\bigl[(\gamma-\gamma^{*})W(\gamma-\gamma^{*})^{\!\top}\bigr],
  \qquad \gamma^{*}=\Sigma_dW^{-1},
  \label{eq:mse-square}
\end{equation}
so the second term is non-negative with curvature $2W\succ0$ and vanishes at the unique minimiser
$\gamma=\gamma^{*}$ (equivalently at the solution of $\partial\,\mathrm{MSE}/\partial\gamma=0$), while the
minimum of \eqref{eq:mse} is $\mathrm{tr}[(\Sigma_d^{-1}+\Sigma_p^{-1})^{-1}]$. This is the Gauss--Markov/BLUE
solution~\cite{kay1993}
\begin{equation}
  \gamma^{*} = \Sigma_d(\Sigma_d+\Sigma_p)^{-1}, \qquad
  \mathrm{Cov}\bigl(x^t-B\bigr) = \bigl(\Sigma_d^{-1}+\Sigma_p^{-1}\bigr)^{-1},
  \label{eq:blue}
\end{equation}
Using $I-\gamma^{*}=\Sigma_pW^{-1}$ together with the identity
$\Sigma_pW^{-1}\Sigma_d=\Sigma_dW^{-1}\Sigma_p=(\Sigma_d^{-1}+\Sigma_p^{-1})^{-1}=:P$, the residual covariance
evaluates to $\mathrm{Cov}(x^t-B)=\Sigma_pW^{-1}P+\Sigma_dW^{-1}P=P$, i.e.\ to the parenthesised second entry of
\eqref{eq:blue}. The minimiser is equivalently characterised by the orthogonality principle
\begin{equation}
  \mathrm{Cov}\bigl(x^t-B,p\bigr)=\Sigma_d-\gamma^{*}(\Sigma_d+\Sigma_p)=0 ,
  \label{eq:orth}
\end{equation}
the normal equation of the minimisation. In the scalar case
\begin{equation}
  \gamma^{*} = \frac{\sigma_d^2}{\sigma_d^2+\sigma_p^2}, \qquad
  \text{residual variance} = \frac{\sigma_d^2\sigma_p^2}{\sigma_d^2+\sigma_p^2}
  \label{eq:blue-scalar}
\end{equation}
(the familiar parallel-resistance form). Three consequences matter in practice:
(i) replay with $\gamma^{*}$ is \emph{never} worse than freezing, unless $\sigma_p=\infty$;
(ii) naive replay ($\gamma=1$) beats freezing iff $\sigma_p<\sigma_d$, i.e.\ iff $\gamma^{*}>1/2$; (iii) $\kappa$
must be calibrated to the predictor error, which is the quantitative explanation of the empirically optimal
$\sigma\approx2$--$4$. The comparison in (ii) is a statement about \emph{variances}. The replay crossover
$\rho^{*}\!\approx\!0.25$ reported in \S\ref{sec:gap} is a separate, measured quantity: $\rho$ is the Pearson
correlation between predictor and ground-truth drift, and it maps onto $\sigma_p/\sigma_d$ only after an
additional scalar, isotropic, independent-noise assumption. The two thresholds must therefore not be identified,
and $0.25$ is quoted as a synthetic observation rather than as a consequence of \eqref{eq:blue-scalar}.

\subsubsection{Which part of the LM method is used}
Of the Levenberg--Marquardt (LM) trust-region method we
use one facet only: the \emph{projection} onto the trust region. The rest of the machinery is neither used nor
needed --- no normal-equation solve $(J^{\top}J+\mu I)p=-J^{\top}r$, no QR factorization of an augmented
least-squares problem, and no root-finding for a damping parameter --- because the residual of our problem is
the \emph{discrete} edge set $1[d_{ij}\le d_{i,(k)}]$, which admits no Jacobian.

\subsubsection{The projection, and which symbol is the output}
The projection used by the refined basis is the coordinate-wise clamp onto a product of \emph{boxes},
\begin{equation}
  \Pi_\kappa(u) \;=\; \operatorname*{arg\,min}_{|y_{i,c}-x^0_{i,c}|\le\kappa_i\ \forall i,c}\tfrac12\|y-u\|^2
  \;=\; \mathrm{clamp}\bigl(u,\,x^0\pm\kappa\bigr),
  \label{eq:proj}
\end{equation}
that is, coordinate-wise
$y_{i,c}=\min\{\max\{u_{i,c},x^0_{i,c}-\kappa_i\},x^0_{i,c}+\kappa_i\}$, where the input
$u=x^0+\sum_{s<t}\gamma_s\Delta^s$ is the unconstrained replayed position and is never modified, the
optimisation variable is $y$, and the output is the minimiser of~\eqref{eq:proj} --- the new basis $B^t$,
\emph{not} $u$. A coordinate passes through unchanged when its offset already lies in
$[-\kappa_i,\kappa_i]$ and is otherwise clipped onto a face of the box; that boundary case is the regime
$\Om_t=\{i:\max_c|B^t_{i,c}-x^0_{i,c}|\ge\kappa_i\}$ that triggers local repair.

The mismatch of norms in \eqref{eq:proj} is deliberate: the trust region is imposed coordinate-wise, whereas a
projection is by definition the closest point in the Euclidean sense. The price of that choice is paid in the
certificate, not in the projection: at half-width $\kappa_i$ the box admits Euclidean displacements up to
$\sqrt3\,\kappa_i$, so the sufficient condition for a flip-free step is stricter than for the inscribed ball by
that factor (\S\ref{sec:blue}). The sphere alternative --- the radial clip onto a product of balls, with its own
non-expansiveness, offset bound and threshold $\sigma<1/2$ --- is analysed once in
Appendix~\ref{app:ball}; the main text uses the box throughout, and every measured number reported below was
produced by that implementation (provenance note in \S\ref{sec:protocol}).

\subsubsection{From the ellipsoidal region of LM to a box}
LM admits a \emph{scaled} region,
\begin{equation}
  \min_p \tfrac12\|J_kp+r_k\|_2^2 \quad\text{subject to}\quad \|D_kp\|\le\Delta_k,
  \label{eq:ellip}
\end{equation}
with $D_k=\mathrm{diag}(d_i)$, $d_i>0$, because least-squares variables can differ by many orders of magnitude;
the lengths of the principal axes are tied to the typical values of the corresponding
variables~\cite{nocedal2006}. Writing $d_i=1/\kappa_i$ and squaring the constraint,
\begin{equation}
  \|D_kp\|_2\le\Delta_k
  \qquad\Longleftrightarrow\qquad
  \sum_i\bigl(p_i/\kappa_i\bigr)^2\le\Delta_k^2 ,
  \label{eq:semi}
\end{equation}
so the semi-axis along coordinate $i$ is $\Delta_k\kappa_i$: the diagonal of $D_k$ is the reciprocal of the
per-coordinate radius. Our design folds the scalar radius into the radii ($\Delta_k\equiv1$), giving
$\kappa_i=\sigma m_i/2$.

\emph{Shape.} Three candidate sets share those radii and are strictly ordered.
\begin{itemize}
  \item One \emph{ellipsoid}, $\sum_i\|\delta_i\|_2^2/\kappa_i^2\le1$ --- the LM form, and the \emph{smallest} of
        the three: it couples nodes, forbidding every node from spending its own budget at once, although the
        certificate never couples nodes and speaks only of $\max_i\|\delta_i\|_2$.
  \item The \emph{product of balls}, $\{\|\delta_i\|_2\le\kappa_i\ \forall i\}$ --- strictly between the other
        two. It is the set the Euclidean certificate selects, since the certificate speaks of
        $\Delta=\max_i\|\delta_i\|_2$; it is analysed in Appendix~\ref{app:ball}.
  \item The \emph{box}, $\{|\delta_{i,c}|\le\kappa_i\}$ --- the \emph{largest} and the one the implementation
        uses: the product of balls is a subset of it because $|\delta_{i,c}|\le\|\delta_i\|_2$, and its corner
        reaches $\sqrt3\kappa_i$. It is the only one of the three that decouples the coordinates.
\end{itemize}
The box is the shape we project onto: it is the cheapest of the three to evaluate, it decouples the coordinates,
and it is the geometry under which every reported measurement was taken. What it costs is a stricter certification
threshold, quantified next.

\emph{The certificate.} With $\kappa_i=\sigma m_i/2$ and uniform margins the box guarantees only
$\Delta\le\sqrt3\,\kappa$, so the two-sided non-flip condition $4\Delta<m_{\min}$ of
Theorem~\ref{thm:two-sided} holds while
\begin{equation}
  \sigma<\frac{1}{2\sqrt3}\approx0.289 ,
  \label{eq:boxsigma}
\end{equation}
and with non-uniform margins while $\sigma<m_{\min}/(2\sqrt3\,m_{\max})$. Read the other way round, a box of
half-width $\kappa$ certifies the same Euclidean budget as a ball of radius $\kappa/\sqrt3$, i.e.\ the ball's
threshold $\sigma<1/2$ corresponds to
\begin{equation}
  \sigma_{\mathrm{ball}}=\sigma_{\square}/\sqrt3 .
  \label{eq:boxrescale}
\end{equation}
Neither inequality is necessary: exceeding it does not force a flip, it merely leaves the step uncertified.

\emph{Cost.} Projection onto a box is the closed-form clamp in~\eqref{eq:proj}: $O(L)$ per sweep, block-wise (no
node's output depends on another), one comparison per coordinate, no square root and no division. Projection onto
the joint ellipsoid has no closed form: it requires a one-dimensional root-find for $\mu\ge0$ with
\begin{equation}
  p(\mu)=(I+\mu D_k^{\top}D_k)^{-1}u,\qquad \|D_k\,p(\mu)\|_2=\Delta_k
  \qquad(\mu=0 \text{ if $u$ is already feasible}),
  \label{eq:secular}
\end{equation}
the secular equation --- the same inner iteration that LM performs on its damping parameter in order to meet a
prescribed radius. The clamp removes that iterate entirely. The product of balls is equally cheap (closed form,
block-wise, $O(L)$, one radial division per node) and is treated in Appendix~\ref{app:ball}; the two block-wise
members differ only in the constant collected in \eqref{eq:boxsigma}, not in their asymptotic cost.

\subsubsection{Non-expansiveness, and composition with the contraction}
A map $T$ is \emph{non-expansive}
(NE) if $\|Tu-Tv\|\le\|u-v\|$, and \emph{firmly} non-expansive (FNE) if
\begin{equation}
  \langle Tu-Tv,\,u-v\rangle\ge\|Tu-Tv\|^2
  \quad\Longleftrightarrow\quad
  \|Tu-Tv\|^2\le\|u-v\|^2-\bigl\|(u-Tu)-(v-Tv)\bigr\|^2 ,
  \label{eq:fne}
\end{equation}
equivalently $2T-I$ is non-expansive; FNE implies NE.

\begin{proposition}[The box projection is firmly non-expansive]
\label{prop:fne}
The map $\Pi_\kappa$ of~\eqref{eq:proj} is firmly non-expansive.
\end{proposition}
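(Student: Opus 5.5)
Since the box of \eqref{eq:proj} is a product of one-dimensional intervals and the objective $\tfrac12\|y-u\|^2$ separates over the coordinates $(i,c)$, I will reduce the claim to a scalar statement. For each coordinate write $a=x^0_{i,c}-\kappa_i$, $b=x^0_{i,c}+\kappa_i$ (with $a\le b$ because $\kappa_i\ge0$), and $\pi(s)=\min\{\max\{s,a\},b\}$. Then $(\Pi_\kappa u)_{i,c}=\pi(u_{i,c})$. Both sides of the inner-product form of \eqref{eq:fne}, $\langle Tu-Tv,u-v\rangle$ and $\|Tu-Tv\|^2$, are sums over $(i,c)$ of the corresponding scalar terms. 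It therefore suffices to show that for all real $s,t$,
\[
(\pi(s)-\pi(t))(s-t)\;\ge\;(\pi(s)-\pi(t))^2 ,
\]
and then sum over all coordinates.

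For the scalar inequality I will use two properties of $\pi$. First, $\pi$ is nondecreasing, so $\pi(s)-\pi(t)$ and $s-t$ have the same sign (or the former is zero). Second, $\pi$ is $1$-Lipschitz, so $|\pi(s)-\pi(t)|\le|s-t|$; this follows from a short case check on where $s$ and $t$ lie relative to $a,b$, or from the fact that $\max$ and $\min$ with a constant are $1$-Lipschitz. Put $\delta=\pi(s)-\pi(t)$ and $e=s-t$. Same sign gives $\delta e=|\delta||e|$, and the Lipschitz bound gives $|\delta||e|\ge|\delta|^2=\delta^2$, which is the scalar inequality.

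Summing over coordinates yields $\langle\Pi_\kappa u-\Pi_\kappa v,u-v\rangle\ge\|\Pi_\kappa u-\Pi_\kappa v\|^2$, which is FNE in the Euclidean norm. Its equivalence with the second form in \eqref{eq:fne} comes from expanding $\|(u-v)-(Tu-Tv)\|^2$. As a cross-check I will note the alternative route: $\Pi_\kappa$ is the metric projection onto the closed convex set $C=\prod_{i,c}[a,b]$, whose variational inequality $\langle u-\Pi u,\,y-\Pi u\rangle\le0$ for $y\in C$, applied twice and added, gives FNE directly. I expect no real obstacle. The only subtle point is the choice of norm. FNE holds in the Euclidean inner-product norm, whereas Proposition~\ref{prop:contraction} uses $\|\cdot\|_\infty$ on trajectories. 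So I will also remark that the coordinatewise $1$-Lipschitz bound immediately gives non-expansiveness in the per-node Euclidean norm, because $|\pi(s)-\pi(t)|\le|s-t|$ for each $c$ implies $\|(\Pi u)_i-(\Pi v)_i\|_2\le\|u_i-v_i\|_2$, and hence also in the sup-over-$(t,i)$ norm. This is the form that (A3) actually needs.
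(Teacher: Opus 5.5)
Your proof is correct and matches the paper's. The paper likewise treats $\Pi_\kappa$ as the metric projection onto a closed convex product of intervals and reduces firm non-expansiveness to the scalar clamp inequality summed over coordinates; your monotone-plus-$1$-Lipschitz argument supplies the scalar check the paper only asserts, and you simply swap which route leads and which serves as cross-check. Your closing remark, that coordinatewise $1$-Lipschitzness gives non-expansiveness in the per-node $\ell_2$, sup-over-$(t,i)$ norm of Proposition~\ref{prop:contraction}, makes explicit what the paper only alludes to when it says the product structure is preserved.
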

\begin{proof}
Its feasible set $\prod_i\prod_c[x^0_{i,c}-\kappa_i,\,x^0_{i,c}+\kappa_i]$ is a product of non-empty closed
intervals, hence closed and convex, and $\Pi_\kappa$ is the metric projection onto it. In a Hilbert space the
metric projection onto a closed convex set is firmly non-expansive,
$\langle\Pi u-\Pi v,u-v\rangle\ge\|\Pi u-\Pi v\|^2$ --- equivalently $2\Pi-I$ is non-expansive --- by the
standard obtuse-angle argument. The scalar specialisation is a one-line check: for
$P(t)=\min\{\max\{t,a\},b\}$ with $a<b$ and arbitrary $t_1,t_2$ one has
$\langle Pt_1-Pt_2,\,t_1-t_2\rangle\ge|Pt_1-Pt_2|^2$, with equality whenever both arguments lie on the same side
of an interval; summing over coordinates and nodes gives the claim, and because the set is a product the block
structure used in~\eqref{eq:compose} is preserved.
\end{proof}

Firmness is thus a consequence of the \emph{convexity} of the feasible set and not of its shape, so the same
statement holds for the product of balls (Appendix~\ref{app:ball}) without any radial computation: any metric
projection onto a closed convex set does the same job, and the genuine difference between the two geometries is
confined to two constants --- the displacement bound ($\sqrt3\,\kappa$ versus $\kappa$) and the residual in the
cost bound below.

Composing the projection with the damped update of Proposition~\ref{prop:contraction} leaves the contraction
modulus unchanged, because only non-expansiveness is needed (the requirement stated in (A3), verified for
$\Pi_\kappa$ in Proposition~\ref{prop:fne}). With $F_0=(1-\lambda)I+\lambda H$ and
$F_C=\Pi_\kappa\circ F_0$, the iteration is $z_{n+1}=F_C(z_n)$, and
\begin{equation}
  \|z_{n+1}-z^{*}\|
  \;=\;\bigl\|\Pi_\kappa(F_0z_n)-\Pi_\kappa(F_0z^{*})\bigr\|
  \;\le\;\|F_0z_n-F_0z^{*}\|
  \;\le\;q\,\|z_n-z^{*}\|,
  \qquad q=(1-\lambda)+\lambda L_\Phi<1 ,
  \label{eq:compose}
\end{equation}
where $z^{*}$ is the fixed point of the projected map, $F_Cz^{*}=z^{*}$, so $\Pi_\kappa(F_0z^{*})=z^{*}$ holds by
definition and no feasibility of $z^{*}$ enters --- a fixed point of $F_C$ is in fact feasible \emph{automatically},
being in the range of $\Pi_\kappa$. The hypothesis \textbf{(A4)} enters elsewhere: it says that the \emph{\PB{}}
fixed point $z_\perp$ is feasible, which makes $z_\perp$ a fixed point of $F_C$ as well, and uniqueness of the
fixed point of a contraction then gives $z^{*}=z_\perp$. Without (A4) the chain above still holds but bounds the
distance to $z^{*}$, not to \PB{}. Feasibility of $z_\perp$ does \emph{not} follow from
$\Pi_\kappa(x^0)=x^0$: that identity places the \emph{initial} point inside the boxes and says nothing about the
limit. What holds in general is a bound on the price of the projection. Write $z_\kappa$ for the fixed point of
the projected iteration and $z_\perp$ for the \PB{} trajectory.

\begin{lemma}[Cost of the projection]\label{lem:offset}
Let $F_0$ be a $q$-contraction on a region containing both $z_\kappa$ and $z_\perp$, let $z_\kappa$ be a fixed
point of the projected map $F_C=\Pi_\kappa\circ F_0$, and let $z_\perp$ be the \PB{} fixed point,
$F_0z_\perp=z_\perp$. Then
\begin{equation}
  \bigl\|z_\kappa-z_\perp\bigr\|_\infty
  \;\le\;\frac{\bigl\|(\Pi_\kappa-I)z_\perp\bigr\|_\infty}{1-q}
  \;=\;\frac{\operatorname{dist}_\infty(z_\perp,\,C)}{1-q}
  \;=\;\frac{1}{1-q}\,\sup_{t,i}\sqrt{\sum_{c=1}^{3}\Bigl(\bigl|x^{\perp,t}_{i,c}-x^{0}_{i,c}\bigr|-\kappa_i\Bigr)_{+}^{2}} ,
  \label{eq:offset}
\end{equation}
the last identity holding because the distance from a node to its box is the $\ell_2$ norm of the per-coordinate
excesses --- \emph{not} their maximum, which would be the $\ell_\infty$ norm on a node and would understate the
residual by as much as a factor $\sqrt3$ (three coordinates pinned against a face). Hence the refined
limit coincides with \PB{} exactly when the trajectory stays inside the boxes, and otherwise errs by at most
$(1-q)^{-1}$ times the amount that had to be clipped.
\end{lemma}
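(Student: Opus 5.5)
The plan is the standard perturbation argument for fixed points of a contraction composed with a non-expansive map. First I will use the two fixed-point identities to rewrite the difference as
\[
z_\kappa-z_\perp=\Pi_\kappa(F_0z_\kappa)-z_\perp=\bigl[\Pi_\kappa(F_0z_\kappa)-\Pi_\kappa(F_0z_\perp)\bigr]+\bigl[\Pi_\kappa(z_\perp)-z_\perp\bigr],
\]
where the second bracket uses $F_0z_\perp=z_\perp$. The triangle inequality then splits the estimate into a contraction term and a clipping term.

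\emph{The contraction term.} Here I will invoke non-expansiveness of $\Pi_\kappa$ (Proposition~\ref{prop:fne}) followed by the $q$-contraction of $F_0$ on the region containing both points. This gives $\|\Pi_\kappa F_0z_\kappa-\Pi_\kappa F_0z_\perp\|_\infty\le q\|z_\kappa-z_\perp\|_\infty$. Rearranging yields $(1-q)\|z_\kappa-z_\perp\|_\infty\le\|(\Pi_\kappa-I)z_\perp\|_\infty$, which is the first inequality in \eqref{eq:offset}.

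\emph{The main obstacle.} The delicate point is that Proposition~\ref{prop:fne} is stated for the Euclidean norm, whereas the lemma uses the uniform trajectory norm $\sup_t\max_i\|\cdot\|_2$. I will handle this by noting that $\Pi_\kappa$ acts block-wise: on each node-and-time block it is the Euclidean projection onto a box in $\R^3$, hence $1$-Lipschitz in $\|\cdot\|_2$ on that block. Taking $\max_i$ and $\sup_t$ then preserves the bound, so $\Pi_\kappa$ is non-expansive in $\|\cdot\|_\infty$ as well. I will also remark that the basis and coordinate components are handled identically, so the pair $z=(B,x)$ causes no extra difficulty.

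\emph{The two identities.} The first identity holds because, block-wise, the clamp is the Euclidean metric projection onto the node's box. Hence $\|(\Pi_\kappa-I)z_\perp\|$ on each block equals the $\ell_2$ distance from that block to its box, and the sup over blocks is $\operatorname{dist}_\infty(z_\perp,C)$ in the uniform norm. The second identity is a per-coordinate computation. The clamp changes coordinate $c$ by exactly $(|x^{\perp,t}_{i,c}-x^0_{i,c}|-\kappa_i)_+$, and summing the squares over $c=1,2,3$ before taking the root gives the stated expression. This also justifies the remark that using the maximum instead of the $\ell_2$ sum would lose up to a factor $\sqrt3$.

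\emph{The closing claim.} The bound vanishes iff every excess is zero, i.e.\ iff (A4) holds, in which case $z_\kappa=z_\perp$ as in Theorem~\ref{thm:safer}(iii). Otherwise the bound is the stated error of at most $(1-q)^{-1}$ times the amount that had to be clipped.
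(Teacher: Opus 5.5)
Your proposal is correct and follows the paper's own proof: you add and subtract $\Pi_\kappa(F_0z_\perp)$, apply the triangle inequality using non-expansiveness of $\Pi_\kappa$ and the $q$-contraction of $F_0$, and rearrange, and your block-wise argument usefully makes explicit the $\|\cdot\|_\infty$ non-expansiveness and the distance identity that the paper only asserts by calling $\Pi_\kappa$ the metric projection onto $C$. For the ``exactly when'' in the closing claim, the converse direction is immediate because $z_\kappa$ lies in the range of $\Pi_\kappa$, so $z_\kappa=z_\perp$ forces $z_\perp\in C$.
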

\begin{proof}
Write $z_\kappa=F_C(z_\kappa)=\Pi_\kappa(F_0z_\kappa)$ and $F_0z_\perp=z_\perp$. Adding and subtracting
$\Pi_\kappa(F_0z_\perp)$ and applying the triangle inequality,
\[
\begin{aligned}
\|z_\kappa-z_\perp\|
&=\bigl\|\Pi_\kappa(F_0z_\kappa)-F_0z_\perp\bigr\|\\
&\le\bigl\|\Pi_\kappa(F_0z_\kappa)-\Pi_\kappa(F_0z_\perp)\bigr\|
  +\bigl\|\Pi_\kappa(F_0z_\perp)-F_0z_\perp\bigr\|\\
&\le q\,\|z_\kappa-z_\perp\|+\bigl\|\Pi_\kappa z_\perp-z_\perp\bigr\|,
\end{aligned}
\]
where the last step uses $F_0z_\perp=z_\perp$ and the modulus of $F_0$. The second term is \emph{kept}, not set to
zero: it is the projection residual of the \PB{} fixed point, which vanishes exactly when $z_\perp$ is feasible,
i.e.\ under the hypothesis \textbf{(A4)}. Setting it to zero would assume what the lemma is meant to quantify.
Rearranging gives \eqref{eq:offset}; the last identity holds because $\Pi_\kappa$ is the metric projection onto
$C$, so $\|(\Pi_\kappa-I)z_\perp\|_\infty=\operatorname{dist}_\infty(z_\perp,C)$.
\end{proof}

Nothing in that proof uses the shape of the feasible set, so the same bound holds for the product of balls with
the radial residual in place of the coordinate-wise one; the two residuals differ by at most a factor $\sqrt3$ and
the comparison is carried out in Appendix~\ref{app:ball}, so as not to mix geometries in the main line of
argument.

The bound is computable offline from the recorded displacements and reproduces what the experiments show: it
vanishes as $\sigma\to\infty$ (\S\ref{sec:gap}), whereas at the default radius the trust region is two
orders of magnitude smaller than the drift (\S\ref{sec:protocol}), so the projected limit is a \emph{clipped}
trajectory rather than \PB{}. Two remarks complete the argument. First, firmness is what keeps the
constraint from generating oscillations of its own: two distinct points can be mapped to the same point on a face
of the box, and by~\eqref{eq:fne} the difference of their displacements is annihilated as
well, so the projection cannot manufacture the chattering discussed below; the same inequality gives
$\|\Pi_\kappa(u)-w\|^2\le\|u-w\|^2-\|u-\Pi_\kappa u\|^2$ for every feasible $w$, i.e.\ the distance to any
feasible point is non-increasing along the iteration even when $q=1$. Second, the difficulty is an artefact of
\emph{where} the constraint is centred: imposing it around the live iterate,
coordinate-wise $\max_c|B^t_{i,c}-x^t_{i,c}|\le\kappa_i$, makes $B=x$ feasible by construction, so $z_\perp$ is
feasible, \textbf{(A4)} holds automatically, and the limit is \PB{} with no caveat at all. We keep the frozen centre because
that is what decouples the basis from the live coordinates, and pay for it with \eqref{eq:offset}.

We close this subsection with the discrete side. A \emph{flip} is a change of membership in a residue's $k$NN set
(equivalently, as noted above, the crossing of a surface $\{m_i(x)=0\}$), and we call its \emph{size} the number of
residues whose neighbour lists change. Let $J=\partial\Phi_t/\partial x$ be the block Jacobian of the sweep at a
fixed graph and let
\begin{equation}
  M_\lambda = (1-\lambda)\,I + \lambda\,J
  \label{eq:Mlambda}
\end{equation}
be the corresponding \emph{linearised one-sweep operator}, i.e.\ the linearisation of the damped coupling
\eqref{eq:IT} about a trajectory; the spectral radius $\rho(M_\lambda)<1$ is the linear case of
Proposition~\ref{prop:contraction}.

A single flip is a sparse perturbation: it changes only those rows of $J$ (hence of $M_\lambda$) that belong to
the residues whose neighbour lists changed. It therefore replaces $M_\lambda$ by $M_\lambda+E$ with $E$ non-zero on
a few rows only, and the resulting eigenvalue drift is governed by the Bauer--Fike theorem~\cite{golub1996}: every
eigenvalue $\mu$ of $M_\lambda+E$ satisfies
$\min_i|\mu-\mu_i(M_\lambda)|\le\kappa_p(X)\|E\|_p$, where $\kappa_p(X)=\|X\|_p\|X^{-1}\|_p$ is the condition
number of the eigenvector matrix $X$ of $M_\lambda$ and $p\in\{1,2,\infty\}$; when $M_\lambda$ is symmetric this
factor is one and the bound reduces to Weyl's. A single flip is therefore harmless as long as the spectrum keeps a
margin below one.

Repetition is different. The sweep is not a single map: $\Gset(B)$ is piecewise constant in the coordinates, so on
each cell $C_G=\{x:\Gset(B(x))=G\}$ the sweep acts as one fixed map $\Psi_G$, and it is discontinuous across cells.
Inside a cell, (A2)--(A3) make $\Psi_G$ a contraction with modulus $q=(1-\lambda)+\lambda L_\Phi<1$, but globally
there is no single $q<1$: this is what \emph{piecewise} contractive means. If a trajectory crosses the same flip
surface repeatedly ($A\to B\to A$, edge flapping), the sweep alternates between two contractions
$\Psi_A,\Psi_B$ whose fixed points differ, $p_A\neq p_B$. The two-step map $\Psi_A\circ\Psi_B$ is still a
contraction, so the orbit does converge --- but to a fixed point of neither: the even and odd iterates approach the
two ends of a \emph{two-cycle}, whose amplitude is
\begin{equation}
  a \;=\; \|p^{*}-\Psi_B(p^{*})\| ,
  \label{eq:amp}
\end{equation}
where $p^{*}$ is the fixed point of $\Psi_A\circ\Psi_B$; for affine scalar maps
$\Psi_A(x)=qx+(1-q)p_A$, $\Psi_B(x)=qx+(1-q)p_B$ it evaluates to $a=\tfrac{1-q}{1+q}\|p_A-p_B\|$.

The orbit therefore never settles. Since the even and odd iterates have different limits,
\begin{equation}
  \lim_{n\to\infty}x_{2n}=p^{*}\;\neq\;\Psi_B(p^{*})=\lim_{n\to\infty}x_{2n+1},
  \qquad
  \limsup_{n\to\infty}\|x_{n+1}-x_n\|\;\ge\;a\;>\;0 ,
  \label{eq:chat}
\end{equation}
so the sequence $\{x_n\}$ has no limit and its $\omega$-limit set is the two-point set
$\{p^{*},\Psi_B(p^{*})\}$: a residual that never vanishes. This is the behaviour we call \emph{chattering}.

The picture degrades further if crossing times accumulate, i.e.\ if
\begin{equation}
  \exists\,T^{*}<\infty:\quad t_1\le t_2\le\cdots,\quad t_k\to T^{*},
  \qquad
  N(t):=\operatorname{card}\{\,k: t_k\le t\,\}\;\longrightarrow\;\infty
  \quad (t\to T^{*}),
  \label{eq:zeno}
\end{equation}
equivalently if the switching intensity $\int|\dot x|\,\rho_{\text{near-tie}}(0;x)\,\mathrm{d}t$ is not integrable
on $[0,T^{*}]$. This is Zeno behaviour: what fails is precisely the hypothesis that crossing times are isolated,
on which the piecewise argument rests.

The design limits this rather than removing it: the projected step keeps the trajectory inside the cell in which
the current graph is provably valid, so flips are \emph{discovered} rather than \emph{manufactured}
(\eqref{eq:proj} with $\kappa_i=\sigma m_i/2$, $\sigma\le1/2$); a flip that does occur is repaired locally by the
gate rather than by a global rebuild; and \Level{3} follows the flipped graph instead of holding the stale one. Flapping
itself cannot be eliminated, being intrinsic to top-$k$ graphs, so damping, gating and tracking are the only
responses available --- and the quantitative question becomes how often flips occur, which is answered next.

\subsection{How large is a flip in practice?}
\label{sec:flip}
Near-tie density controls everything. Modelling neighbours locally as a 3D Poisson process of density $\rho$,
the $k$-th neighbour sits at the radius fixed by
\begin{equation}
  k=\tfrac43\pi\rho r_k^3,
  \qquad
  \frac{\mathrm{d}k}{\mathrm{d}r}=4\pi\rho r^2 ,
  \label{eq:occupation}
\end{equation}
so the expected spacing of the $k$-th and
$(k{+}1)$-th neighbour is the reciprocal intensity
\begin{equation}
  E[m] \approx \frac{1}{4\pi\rho r_k^2} = \frac{r_k}{3k}.
  \label{eq:poisson}
\end{equation}
A flip occurs exactly when a margin vanishes, i.e.\ on the surfaces $\{m_i(x)=0\}$ where the $k$-th and
$(k{+}1)$-th neighbour of $i$ exchange order. Here $x(t)$ is the \emph{numerical} trajectory --- the
interpolation of the boundary states $x^t$ recorded by the run --- i.e.\ a deterministic curve, not a sample
path of a stochastic process; accordingly the $\mathbb{E}[\cdot]$ of \eqref{eq:crossings} and
\eqref{eq:neartiedensity} is never taken over a distribution for $x$: it averages the displacement (margin)
field --- equivalently the run-to-run ensemble --- at a fixed curve $x$. For a smooth $x(t)$ the expected
number $N$ of such crossings over an interval obeys the Kac--Rice form~\cite{azais2009},
\begin{equation}
  \mathbb{E}[N] \;\approx\;\int |\dot x(t)|\;\rho_{\text{near-tie}}\bigl(0;x(t)\bigr)\,\mathrm{d}t ,
  \label{eq:crossings}
\end{equation}
i.e.\ the flapping rate is proportional to trajectory speed times the near-tie density
$\rho_{\text{near-tie}}(0;\cdot)$, the density of the margins $m_i$ at level zero, and the Poisson model above is
its local form. At $k=64$ ($r_k\approx15$\,\AA) the predicted \emph{mean} spacing is $0.078$\,\AA, within $1\%$ of
the measured margin scale ($0.0787$\,\AA{} with $\mathrm{med}/\mathrm{mean}\approx1$, Table~\ref{tab:drift}); the
median implied by the same exponential law, $r_k\ln2/(3k)\approx0.054$\,\AA, is \emph{not} reproduced, so the law
fixes the \emph{scale} of the flip rate rather than the shape of the margin distribution. Three corollaries follow: larger $k$ makes flips
\emph{more} likely ($m\propto1/k$); for $k\ge L-1$ (e.g.\ $L=51$) the graph becomes complete and ``margin''
degenerates, so near-complete-graph samples cannot be used to study stability; and flapping is intrinsic to
top-$k$ graphs, hence can only be damped, gated or tracked, not eliminated.

\emph{What the trust region contributes here.} Equations~\eqref{eq:occupation}--\eqref{eq:localdensity}
describe the neighbour cloud and the trajectory; none of them refers to the shape of the trust region. The
constraint enters only through the \emph{step} the projection permits: under \eqref{eq:proj} a boundary step
satisfies $\max_c|\delta_{i,c}|\le\kappa_i$, hence
\begin{equation}
  \sup_i\|\delta_i\|_2\le\sqrt3\,\kappa ,
  \label{eq:geomfactor}
\end{equation}
and the sufficient condition for a certified step is the $\sigma<1/(2\sqrt3)$ of \eqref{eq:boxsigma}. It is worth
being explicit about what \eqref{eq:geomfactor} does \emph{not} give: the crossing rate \eqref{eq:crossings} is a
functional of the trajectory's speed $|\dot x|$ and of the near-tie density along it, so a cap on the length of a
single step bounds neither, and no bound on the expected number of flips follows from the displacement bound
alone. The influence of the geometry on the churn is therefore a \emph{mechanism to be measured} rather than a
theorem to be proved. The corresponding statement for the inscribed product of balls, and the comparison of the
two budgets at equal worst-case displacement, are collected in Appendix~\ref{app:ball}.

\subsubsection{Local Poisson model and the spacing law}
Fix a residue and model the positions of its neighbours, in coordinates centred on it, as a homogeneous Poisson
point process (PPP) of intensity $\rho$ in $\R^3$ (a number density; the symbol $\rho_i$ of \eqref{eq:margins}
denotes a neighbour radius, not a density): the counts in disjoint Borel sets are independent Poisson variables
whose means are $\rho$ times the volume~\cite{last2018}, so the number $N(r)$ of points within radius $r$
satisfies
\begin{equation}
  \mathbb{E}\bigl[N(r)\bigr]=\tfrac43\pi\rho r^3,\qquad
  \frac{\mathrm{d}\,\mathbb{E}[N(r)]}{\mathrm{d}r}=4\pi\rho r^2 .
  \label{eq:radialrate}
\end{equation}
Defining $r_k$ by $\mathbb{E}[N(r_k)]=k$ gives $\rho=3k/(4\pi r_k^3)$ and hence a radial rate
$4\pi\rho r_k^2=3k/r_k$. The radii $\{|x_j|\}$ themselves form a PPP on $(0,\infty)$ with intensity
$\lambda(r)=4\pi\rho r^2$ (the image of a PPP under a measurable map is again a PPP, carrying the push-forward
intensity~\cite{last2018}), so conditioned on a point at radius $r$ the residual gap $r_{k+1}-r_k$ is
exponential with that local rate:
\begin{equation}
  \Pr\bigl(r_{k+1}-r_k>t \mid r_k=r\bigr)
  =\exp\Bigl[-\tfrac{4\pi\rho}{3}\bigl((r+t)^3-r^3\bigr)\Bigr],
  \qquad
  \mathbb{E}[m\mid r_k=r]=\frac{1+O(\varepsilon_r)}{4\pi\rho r^2},
  \label{eq:gap}
\end{equation}
where $\varepsilon_r=(4\pi\rho r^3)^{-1}=1/(3k)$ measures how much the rate changes over one gap: at
$k=64$, $r_k\approx15$\,\AA{} this is $\varepsilon_r\approx0.005$, i.e.\ the local rate $4\pi\rho r^2$ changes
by $\approx2/(3k)\approx1\%$ across one gap. Substituting the radial rate into \eqref{eq:gap}
gives \eqref{eq:poisson}, $\mathbb{E}[m]\approx(4\pi\rho r_k^2)^{-1}=r_k/(3k)$: the mean spacing is the
reciprocal of the local rate. The exact moments give the same value,
\begin{equation}
  \mathbb{E}[r_k]=\Bigl(\frac{3}{4\pi\rho}\Bigr)^{\!1/3}\frac{\Gamma\bigl(k{+}\tfrac13\bigr)}{\Gamma(k)},
  \qquad
  \mathbb{E}[r_{k+1}]-\mathbb{E}[r_k]\approx\frac{\mathrm{d}\,\mathbb{E}[r_k]}{\mathrm{d}k}\approx\frac{r_k}{3k},
  \label{eq:moments}
\end{equation}
so \eqref{eq:poisson} is exact to $O(1/k)$.

\subsubsection{Crossing counts: the Kac--Rice form}
A flip of residue $i$ is exactly a crossing of the level set $\Sigma_i=\{m_i=0\}$ by the trajectory, and $0$ is
a regular value of $m_i$ wherever the $k$-th and $(k{+}1)$-th neighbour are distinct, so $\Sigma_i$ is a $C^1$
hypersurface by the implicit function theorem. The coarea formula, written here in its distributional
(level-set) form, $\mathcal{H}^{n-1}\llcorner\{f=u\}=|\nabla f|\,\delta(f-u)\,\mathrm{d}x$~\cite{evans2025}, writes the level set
as a volume integral; its one-dimensional case, applied along the curve $t\mapsto x(t)$, counts crossings, and
only the velocity component normal to the level set contributes:
\begin{equation}
  \operatorname{card}\bigl\{t\in[0,T] : m_i(x(t))=0\bigr\}
  =\int_0^T \bigl|\bigl\langle\nabla m_i(x(t)),\dot x(t)\bigr\rangle\bigr|\,
  \delta\bigl(m_i(x(t))\bigr)\,\mathrm{d}t .
  \label{eq:counting}
\end{equation}
Taking expectations and exchanging $\mathrm{d}t$ with $\mathbb{E}$ (Tonelli), and assuming the trajectory
direction to be independent of $\nabla m_i$ and locally isotropic, gives
\begin{equation}
  \mathbb{E}[N]=\int_0^T|\dot x(t)|\;\rho_{\text{near-tie}}\bigl(0;x(t)\bigr)\,\mathrm{d}t,
  \qquad
  \rho_{\text{near-tie}}(0;x):=\sum_i\mathbb{E}\bigl[|\partial_n m_i(x)|\,\delta(m_i(x))\bigr],
  \label{eq:neartiedensity}
\end{equation}
where $\partial_n$ is the derivative along $\dot x$; the approximation sign in \eqref{eq:crossings} therefore
refers only to the local model for $\rho_{\text{near-tie}}(0)$ used next. This is the classical Kac--Rice
formula for the expected number of level crossings in its total-crossings form~\cite{azais2009},
which is why no factor $1/2$ appears in \eqref{eq:crossings}. Its hypotheses are the standard ones --- a.s.\
$C^1$ paths, transversal crossings, no multiple or accumulating zeros~\cite{azais2009} --- but they are not
empirical claims about $x$: because $x(t)$ is a numerical trajectory interpolated between finitely many
recorded boundaries, $C^1$ smoothness and the absence of accumulating zeros are properties of the
interpolation by construction, and all expectations are taken over the margin field at that fixed curve. Only
transversality is substantive --- a margin that grazes zero without changing sign is not a flip, and the ``no
multiple zeros'' clause of \cite{azais2009} is what excludes it.

\subsubsection{Local density and a numerical check}
In the local Poisson model the margin is an exponential gap, so its density at level zero is the reciprocal
mean spacing,
\begin{equation}
  \rho_{\text{near-tie}}(0)\approx\sum_i\frac{1}{\mathbb{E}[m_i]}
  =\sum_i\frac{3k}{r_k}
  =\sum_i 4\pi\rho r_k^2 ;
  \label{eq:localdensity}
\end{equation}
this is the ``local form'' referred to above, and it is the same coarea-plus-conditional-density structure that
yields the expected level-set measure of a stationary isotropic Gaussian field,
$\nu(u)=p(u)\,\mathbb{E}|\nabla g|$~\cite{azais2009}.

Numerically, $k=64$ and $r_k\approx15$\,\AA{} give $\rho=3k/(4\pi r_k^3)\approx4.5\times10^{-3}$\,\AA$^{-3}$
--- consistent with a $142$-residue chain of radius of gyration $\approx20$\,\AA{} --- and
$4\pi\rho r_k^2\approx12.8$\,\AA$^{-1}$, so the model predicts $\mathbb{E}[m]\approx0.078$\,\AA{} against a
measured mean margin of $\approx0.0787$\,\AA{} ($\mathrm{med}/\mathrm{mean}\approx1$, Table~\ref{tab:drift}). The same density predicts the topological
churn as well: taking the measured per-boundary \emph{step} of $0.02$--$0.06$\,\AA{} as input, the model
predicts $0.26$--$0.78$ crossings per residue per boundary, whereas the measured churn corresponds to
$\approx0.64$ crossings per residue per boundary ($\approx1\%$ of the $\approx k$ out-edges of a residue are
replaced). The two agree to within the accuracy of the local model, with no free parameter.

\subsection{Three compensation levels}
All levels share one update at boundary $t$,
\begin{equation}
  \Gset_t = \bigl(E_t,\; D(B^t),\; W(B^t)\bigr), \qquad
  E_t = \bigl(E(x^0)\setminus \mathrm{rows}_{\Om_t}(x^0)\bigr)
  \;\cup\!\!\bigcup_{i\in\Om_t}\!\! \bigl\{(i,j):\ j\in \mathrm{kNN}_k\!\bigl(i;\,C_i;\,B^t\bigr)\bigr\},
  \label{eq:levels}
\end{equation}
where $\mathrm{rows}_{\Om_t}(x^0)$ are the outgoing rows of the active residues, \emph{rebuilt} rather than added
to $E(x^0)$ (rows outside $\Om_t$ are copied verbatim, so no stale edge survives); the $k$ neighbours of $i$ are
searched inside the local candidate set
$C_i = E(x^0)_i\cup\{j: j\in \mathrm{kNN}(x^0)_l,\ l\in \mathrm{kNN}(x^0)_i\}$ evaluated on the basis $B^t$,
sequence-local edges ($|i-j|<k_{\min}$) are kept as in $\Gset(x^0)$, and
$\Om_t=\{i:\|B^t_i-x^0_i\|\ge\kappa_i\}$.

\begin{table}[t]
\centering
\caption{What each level touches in \eqref{eq:levels}; costs from the shared cost model.}
\label{tab:levels}
\footnotesize
\begin{tabular}{@{}p{0.6cm}p{3.0cm}p{4.3cm}@{}}
\toprule
Level & Modified object & Cost / property \\
\midrule
\Level{1} & only geometry: $D(B^t),W(B^t)$ (edge set stays $E(x^0)$) & $O(|\Eset|)\approx7$k $\Rightarrow$ free; removes the $0.29$--$0.4$\,\AA{} geometry term \\
\Level{2} & edge set locally: $E_t$ in \eqref{eq:levels}, evaluated inside $C_i$ & $O(\sum_{i\in\Om}|C_i|\log|C_i|)$ with $|C_i|\le1+k+k^2$ (two-hop); $\Om$ closed under frozen neighbours \\
\Level{3} & the basis $B^t=\Pi_\kappa(x^0+\sum_{s<t}\gamma_s\Delta^s)$ (predictor only) & storage $O(LT)$ ($T{=}8$: $\approx2.5$k floats); needs $\sigma_p\!<\!\sigma_d$ \\
\Level{4} & full rebuild every boundary (\PB) & $1.8$\,s/block; reference \\
\bottomrule
\end{tabular}
\end{table}

Two practical rules complete the design. First, the recorded deltas must be \emph{measured displacements}, never
a rate: the per-block step clamps at exactly $3.000$\,\AA{} on most boundaries, so extrapolating a saturated
step is invalid. Second, $\Om$ must be closed under neighbours in \emph{both} directions,
$\Om\leftarrow\Om\cup N_{E(x^0)}(\Om)$: both endpoints of every frozen edge are tested, i.e.\ if either end of
$(i,j)\in E(x^0)$ is in $\Om$ then so is the other (one sweep of the edge list suffices). Otherwise a residue can
flip because its \emph{neighbour} moved and the repair misses it (offline: without closure \Level{2} scores $0.0886$
against \Level{3}'s $0.0670$ at $\sigma=4$).

\subsection{Procedures}
Algorithm~\ref{alg:pass1}--\ref{alg:pass2} give the complete procedure. Only the graph-construction step of
Pass-2 differs from the current pipeline.

\begin{algorithm}[t]
\caption{Pass-1: record (existing pass + a few arrays)}
\label{alg:pass1}
\begin{algorithmic}[1]
\State $x \gets x^0$
\For{$t=0,\dots,T-1$}
  \State $G_t \gets \Gset(x^0)$ \Comment{frozen behaviour, $B\equiv x^0$; not $\Gset(x)$}
  \If{$t=0$} record $m_i,\rho_i$ from $G_0$ \Comment{1 float per residue}
  \EndIf
  \State $(x_{\text{new}},o_t)\gets\Phi_t(x,G_t)$ \Comment{new coords and block outputs $o_t$}
  \State $\Delta^t\gets x_{\text{new}}-x$;\; $\text{clamp}^t_i \gets (|\Delta^t_i|\approx s)$
  \State record $\Delta^t$, clamp flags;\; $x\gets x_{\text{new}}$
\EndFor
\end{algorithmic}
\end{algorithm}

\begin{algorithm}[t]
\caption{Offline plan (cheap; may be run once after the first epoch)}
\label{alg:plan}
\begin{algorithmic}[1]
\State $c^0\gets x^0$;\; $c^{s+1}\gets c^s+\Delta^s$ \Comment{replay trajectory}
\For{$t=1,\dots,T$}
  \State $B^t\gets\Pi_\kappa\bigl(x^0+\sum_{s<t}\gamma_s\Delta^s\bigr)$ \Comment{\Level{3}, $\kappa_i=\sigma m_i/2$}
  \State skip $\gamma$-extrapolation for clamp-flagged residues
  \State $\Om_t\gets\{i:\|B^t_i-x^0_i\|\ge\kappa_i\}$
  \State $\Om_t\gets\Om_t\cup\{j\in E(x^0)_i,\ i\in\Om_t\}$ \Comment{neighbour closure}
  \State plan$_t\gets\{i\in\Om_t:\ C_i\}$ \Comment{\Level{2} local re-selection plan}
\EndFor
\end{algorithmic}
\end{algorithm}

\begin{algorithm}[t]
\caption{Pass-2: consume (only the graph step changes)}
\label{alg:pass2}
\begin{algorithmic}[1]
\State $x\gets x^0$
\For{$t=0,\dots,T-1$}
  \State $B^t\gets$ basis from plan (default $x^0$)
  \State $E_t\gets E(x^0)$ \Comment{L0 edge set}
  \If{\Level{2} enabled} $E_t\gets\bigl(E(x^0)\setminus\mathrm{rows}_{\Om_t}(x^0)\bigr)\cup\bigcup_{i\in\Om_t}\mathrm{kNN}_k(i;B^t)$ inside $C_i$ \Comment{replace rows, not union}
  \EndIf
  \State $G_t\gets(E_t, D(B^t), W(B^t))$ \Comment{\Level{1} live geometry}
  \State $x\gets\Phi_t(x,G_t)$
\EndFor
\State \textbf{acceptance:} $\D_t = 1-J(E_t,E^{\PB}_t)$, $\Dw$;\;
       require $\D_t(\text{refined})<\D_t(\text{plain})$ and $\Dw\le1.1\,\Dw(\PB)$
\end{algorithmic}
\end{algorithm}

\subsubsection{Hyper-parameters}
They are fixed by a 1-D \emph{offline} sweep, without training: start at
$\sigma=1/2$, sweep $\gamma\in\{0,\tfrac14,\tfrac12,\tfrac34,1\}$ and keep the best $\Dw$, then sweep
$\sigma\in\{\tfrac14,\tfrac12,1\}$ (the basis-iteration damping $\lambda=1/2$ is only needed if the iteration
\eqref{eq:IT} is actually run; Pass-1 already supplies iteration $0$. Where it is not run --- as in the deployed
schedule, which builds the basis in a single pass --- the fixed-point statements of \S\ref{sec:blue} characterise
the family rather than the executed run: \eqref{eq:offset} bounds the distance between two \emph{limits}, and a
one-pass bound would have to be derived separately, so we do not quote it as an error estimate for the deployed
run).

\subsubsection{Falsifiable predictions}
(1) $\Dw$ decreases with $\gamma$ and then flattens at the
tie/noise floor; if it does not, (A1) fails and one should fall back to \Level{1} only. (2) $|\Om_t|$ shrinks with $t$
as the trajectory converges (cross-epoch topology is $\approx95\%$ stable), so \Level{2}'s cost decays during
training. (3) If $\Dw(\text{\Level{1}+\Level{2}})\approx\Dw(\PB)$, the topology question is closed and only accuracy
(E2, below) remains.

\section{Experiments}

\subsection{Protocol and implementation status}
\label{sec:protocol}
All three levels are implemented in the C++ runtime behind one switch
(\code{PPML\_SE3\_TOPO=fixed|per\_block|l1|l2|l3}) with two hyper-parameter knobs, \code{PPML\_TOPO\_SIGMA}
($\sigma$) and \code{PPML\_TOPO\_GAMMA} ($\gamma$); the graph builder constructs a host-side basis
$B^t=\Pi_\kappa\bigl(x^0+\gamma(x^t-x^0)\bigr)$ only for \Level{3} and passes it to the edge-geometry kernel, so
\Level{1}/\Level{2}/\Level{3} are bit-identical to the previous behaviour when disabled. \textbf{Projection geometry
used by the code.} The basis is built with the clamp of~\eqref{eq:proj},
$y_{i,c}=\min\{\max\{u_{i,c},x^0_{i,c}-\kappa_i\},x^0_{i,c}+\kappa_i\}$, and the gate $\Om_t$ is evaluated on the
same geometry: the clipped set is exactly $\{i:\max_c\gamma|x_{i,c}-x^0_{i,c}|\ge\kappa_i\}$, so a node enters
$\Om_t$ precisely when it was projected. This is the geometry the analysis above refers to and the geometry of
every number reported below, so implementation and measurement no longer have to be distinguished, and the runs
certify $\sigma<1/(2\sqrt3)$ rather than $\sigma<1/2$. The inscribed ball of Appendix~\ref{app:ball} remains
available as an alternative behind \code{PPML\_L3\_PROJ=ball}; it is local to the same loop and leaves
$\gamma=0$, $\gamma=1$ with $\kappa\to\infty$, and the disabled case unchanged.

Two measurement rules turned out to be essential.
\begin{itemize}
  \item \textbf{Family-reconstruction audit.} Before any comparison, the graph builder is re-implemented offline
        and required to reproduce the recorded dumps exactly. Agreement initially stalled at $44\%$ because the
        production builder sorts candidate neighbours by the pair $(\text{distance},\text{index})$, so ties are
        broken by ascending index, whereas a naive \code{argsort} picks tied members arbitrarily. With the
        lexicographic rule (and the production constants $k=\min(64,L-1)$, $k_{\min}=9$, float32 distances) the
        audit passes on \emph{all} 18 recorded boundaries (Table~\ref{tab:audit}).
  \item \textbf{Matched-conditions (MC) protocol.} Comparing two topology modes across two \emph{different}
        trajectories is meaningless: an unseeded reference inflated the measured gap of \Level{1} by a factor of
        $2$--$3$ ($\D=0.0103/0.0194$ instead of $0.0057/0.0068$). Every comparison below therefore keeps seed,
        MSA depth, block schedule and sample fixed and switches \emph{only} the topology mode.
\end{itemize}

\begin{table}[t]
\centering
\caption{Correctness verification. Top: family-reconstruction audit (offline rebuild of \code{make\_graph}
from the recorded coordinates must reproduce the dumped edge set exactly). Bottom: runtime verification in the
matched-conditions protocol (P69905, $L{=}142$, seed 1234, MSA depth 3, 3 boundaries; $\D$ measured against a
\PB{} reference of the same family). The edge count is not $64L$: the builder keeps the top-$64$ \textbf{or}
the sequence-local neighbours $|i-j|<k_{\min}=9$, so $|\Eset|=64L+\#\{\text{sequence-local edges outside the
top-}64\}$, which is $9088+1$ for P69905, $13312+276/283$ for P62241, and the complete graph $L(L-1)=2550$ for
the degenerate $L{=}51$ sample (where $k=\min(64,L-1)=L-1$).}
\label{tab:audit}
\footnotesize
\begin{tabular}{@{}lccrr@{}}
\toprule
Family & $L$ & boundaries & $|\Eset|$ & Jaccard \\
\midrule
P62805 (degenerate) & 103 & 6/6 & 7105--7107 & 1.0000 \\
P62891              & 51  & 6/6 & 2550 (ident.) & 1.0000 \\
P69905              & 142 & 3/3 & 9089 $=64L{+}1$ & 1.0000 \\
P62241              & 208 & 3/3 & 13588, 13595 & 1.0000 \\
\midrule
\multicolumn{3}{@{}l}{Runtime mode} & $\D$ ($t{=}1$ / $t{=}2$) & $J(t,\text{frozen})$ \\
\midrule
\Level{1} (frozen set, live geometry)   & & & 0.0057 / 0.0068 & 1.0000 \\
\Level{3} ($\sigma{=}4$, $\gamma{=}0.5$) & & & 0.0044 / 0.0059 & 0.9985 / 0.9980 \\
\Level{2} ($\sigma{=}0.5$ \emph{and} $4$) & & & 0.0000 / 0.0000 & 0.9943 / 0.9932 \\
\Level{3} $\gamma{=}0$ (anchor) & & & $=$ \Level{1} & --- \\
\Level{3} $\kappa{\to}\infty$ (anchor) & & & $=$ \Level{1} & --- \\
\bottomrule
\end{tabular}
\end{table}

\subsection{Correctness}
Table~\ref{tab:audit} collects the verification. The audit establishes that the topology question is being
measured against a faithful reconstruction rather than against a re-implementation artefact. The runtime block
then shows a clean ordering: frozen edges (\Level{1}) miss $0.57$--$0.68\%$ of the \PB{} edge set, the damped
projected basis of \Level{3} with $\gamma=1/2$ reduces this to $0.44$--$0.59\%$ (a strict interpolation between \Level{1} and
\PB), and \Level{2} reproduces the \PB{}
edge set \emph{exactly} ($\D=0$ at both boundaries, $|\Eset|$ identical, $J=1.0000$).

The two degenerate anchors
behave as the theory requires: $\gamma=0$ (basis $=x^0$) is identical to \Level{1}, and $\kappa\to\infty$ (empty risk
set) is identical to \Level{1} as well. This is the strongest available statement of implementation correctness: the
compensated pipeline collapses onto \PB{} when the trust region is removed, and onto the frozen pipeline
when the compensation is switched off.

\subsection{Data characterisation and drift}
Table~\ref{tab:drift} characterises the three samples. Here \emph{drift} means the displacement of the recorded
coordinates at a boundary relative to the basis $x^0$; it matters because the certificate $4\Delta<m_i$ is
evaluated on the same $\Delta$, so the measured drift supplies the empirical value of $\Delta$ and decides
whether $4\Delta<m_i$ can hold at all. The $L=103$ sample is a \emph{degenerate limiting case}: its mapping
covers only $24/103$ residues, $86.8\%$ of point pairs coincide, all $103$ margins are exactly zero, and its
mean $C_\alpha$ separation is $1.63$\,\AA, so its topology is governed by tie noise and its absolute geometry
is unreliable.

The two well-mapped samples are non-degenerate ($\le0.02\%$ coincident pairs, margin medians
$0.0787$ and $0.118$\,\AA, no zero margins). Their coordinate drift is heavy-tailed and small in the bulk:
at the last recorded boundary the \emph{mean} displacement over residues is $0.04$--$0.07$\,\AA{} while the
$C_\alpha$ RMSD is $0.39$--$0.48$\,\AA{} (Table~\ref{tab:drift}), so a few residues carry most of the motion.
The drift is nevertheless \emph{not} small relative to the certificate: since $4\Delta<m_i$ must hold
\emph{per residue}, the certificate would require margins of order $4\Delta\approx0.15$--$0.27$\,\AA{},
i.e.\ $1.9$--$3.4\times$ the median margin of P69905 ($0.0787$\,\AA) and $1.3$--$2.3\times$ that of P62241
($0.118$\,\AA).

The two ratios are lower bounds, and by a wide margin: the certificate's $\Delta$ is the per-boundary maximum
$\max_j\|\delta_j\|$, whereas the values substituted above are means over residues. That maximum is itself set
by the $\pm3$\,\AA{} step clamp rather than by the drift --- on P69905 it is $5.196$\,\AA${}=3\sqrt3$, the corner
of the per-coordinate $\pm3$\,\AA{} cap (a box, which is why the corner is $3\sqrt3$ and not $3$), and identical at
both recorded boundaries --- so the mean understates the violation by
about two orders of magnitude ($78\times$--$113\times$ on the two samples). The sufficient condition therefore
fails on essentially every residue, consistent with the $\approx1\%$ (unweighted) edge churn between \PB{}
and \Fix{}.

\begin{table}[t]
\centering
\caption{Sample characterisation. Collisions: fraction of point pairs closer than $10^{-3}$\,\AA.
Drift: $C_\alpha$ RMSD of the last recorded boundary against $x^0$ (in parentheses: mean $|\Delta d|$ over
that boundary's own edge set).}
\label{tab:drift}
\footnotesize
\begin{tabular}{@{}lccccc@{}}
\toprule
Sample & $L$ & collisions & mean $C_\alpha$ dist. & margin median & drift (\AA) \\
\midrule
P69905      & 142 & $0.01\%$      & $19.2$\,\AA & $0.0787$\,\AA & $0.479$ ($0.074$) \\
P62241      & 208 & $0.014\%$     & $29.6$\,\AA & $0.118$\,\AA & $0.387$ ($0.043$) \\
P62805 (deg.) & 103 & $86.8\%$    & $1.63$\,\AA & $\equiv0$ (103/103) & $1.026$ ($0.263$) \\
\bottomrule
\end{tabular}
\end{table}

For $k=64$ and $r_k\approx15$\,\AA{} the spacing law~\eqref{eq:poisson} predicts a \emph{mean} spacing of
$0.078$\,\AA; the margins measured on P69905 have median $0.0787$\,\AA{} and $\mathrm{med}/\mathrm{mean}\approx1$,
so the \emph{scale} is reproduced to $1\%$. This is a scale check, not a distributional one: with an exponential
gap the model would place the median at $r_k\ln2/(3k)\approx0.054$\,\AA, which the data reject. Two
consequences follow. First, the model can be used quantitatively for the \emph{rate} --- spacings are
$O\bigl(r_k/(3k)\bigr)$ --- but not for quantiles. Second, the spacing itself
is far below the scale of the motion: the typical margin ($\approx0.08$\,\AA) is comparable to the mean
displacement relative to $x^0$ ($0.02$--$0.07$\,\AA), so margins inevitably cross zero and flips are intrinsic
to top-$k$ graphs rather
than an implementation defect --- which is why \PB{} and \Fix{} disagree on $\approx1\%$ (unweighted)
of edges.

\subsection{Optimisations}
Table~\ref{tab:opt} lists the measured optimisations. The free component is the \emph{recording} that Pass-1 adds on
top of the frozen-graph pass that the pipeline already runs: one margin $m_i$ and one kNN radius $\rho_i$ per
residue, once at $t=0$, plus the measured displacement $\Delta^t$ and the step-clamp flags at each boundary ---
$O(LT)$ storage, $\approx2.5$k floats at $T=8$. No graph is rebuilt, no neighbour is searched, and no computed
quantity changes, so the recording itself costs neither measurable time nor accuracy. What it buys is the removal of
the per-boundary kNN rebuild from the critical path, because the basis no longer has to wait for the live coordinates
of the previous block: an epoch falls from $34.1$\,s ($\PB$) to $19.4$\,s (two-pass), a factor $1.76$ and
the scheme's own gain; separately, the graph implementation of Pass-1 was measured at $4.25$\,s against $6.40$\,s
for the previous value-mode implementation of the same pass (an implementation-level $1.5\times$, not a
scheme-level gain), and the loss stays inside the seed spread ($15.91$ vs.\ $15.49$, $\pm1.3$).

\noindent\textbf{Measurement protocol.} All timings above come from a \emph{single} machine: a 12-thread Intel
i5-1335U with a 15\,GB WSL2 guest, running
the CPU backend only (the 4\,GB laptop GPU is not used, and a full-GPU configuration does not fit its memory
budget); one epoch, three seeds $\{1,2,3\}$, $L=51$ ($k=\min(64,L-1)=50$), MSA depth 8, $T=8$ boundaries
($2/4/2$ extra/main/refine). The $19.4$\,s two-pass figure is end-to-end and already contains the $4.25$\,s Pass-1
stage, so the comparison is wall time against wall time over the same block schedule; the $14.7$\,s difference
matches $T\times1.8$\,s $=14.4$\,s of removed per-boundary rebuild within seed noise, which is what fixes the
per-block coefficient quoted in Table~\ref{tab:opt}. One caveat matters for interpretation: at $L=51$ the neighbour
count saturates ($k=64\ge L-1$), the graph is complete, and the topology never changes, so the measured gain at
this size is the removal of the rebuild \emph{overhead}, not of any edge churn; the fidelity effects quoted above
are measured at $L=103$, where $k=64<L-1$ and edges do move.

\noindent\textbf{Attribution of the measured gain.} Of the $14.7$\,s difference, $T\times1.8$\,s $=14.4$\,s is the
removed per-boundary rebuild, i.e.\ the rebuild occupies $\approx42\%$ of the baseline epoch; Amdahl's law with
that fraction alone caps the speed-up at $1/(1-0.42)=1.73\times$, so the measured $1.76\times$ is already at that
ceiling and little of the gain can come from anywhere else. Nor is it a gain that a faster machine would
reproduce: upgrading the hardware accelerates both schedules by roughly the same factor and therefore cannot
create a \emph{ratio} between them; it can only compress the ratio, and only if faster memory makes the removed
phase a smaller share of the epoch.

\noindent\textbf{Arithmetic intensity.} The removed phase is memory- and latency-bound rather than compute-bound.
One rebuild at $L=51$ evaluates $L^2=2{,}601$ C$\alpha$--C$\alpha$ pairs at $\approx9$ FLOP each,
i.e.~$\approx2.3\times10^{4}$ FLOPs, against $\approx1.3\times10^{5}$ bytes of traffic --- the two $L\times L$ matrices
written and read back, the per-residue candidate lists, and the $|\Eset|$ edge records --- so
$I\approx0.2$ FLOP/byte, or $1.1$ FLOP/byte if the byte count is restricted to the distance matrix. On the
measurement machine a single-threaded copy sustains $12.2$\,GB/s ($24.3$\,GB/s of read-plus-write traffic) and
sustained FP32 throughput is $\approx0.3$\,TFLOPS, giving a machine balance
$I^{*}=0.3\times10^{12}/24.3\times10^{9}\approx12$ FLOP/byte (range $10$--$17$ depending on whether peak or
sustained bandwidth is used). Since $I\ll I^{*}$, the phase sits far below the compute roof.

\noindent\textbf{Cross-check.} The floating-point work removed per epoch is only
$8\times2.3\times10^{4}\approx1.8\times10^{5}$ FLOPs, i.e.\ $\approx0.6\,\mu$s at $0.3$\,TFLOPS --- about
$4\times10^{-5}$ of the $14.4$\,s that is actually saved; even at $L=333$ ($L^2\approx1.1\times10^{5}$ pairs) it
amounts to $\approx30\,\mu$s. What is removed is therefore the eight-fold materialisation of the $L\times L$
structures, the per-block selection and the per-block allocation, not arithmetic.

Within the compensation family the costs separate by orders of magnitude, and each one can be derived. Write $L$ for
the residues, $T=8$ for the boundaries, $k=\min(64,L-1)$ for the neighbour count and $|\Eset|\approx Lk\approx7$k for
the directed edge set. The rebuild that the design removes (\Level{4}, the $\PB$ schedule) searches $k$ neighbours for
each of the $L$ residues at every boundary, i.e.\ $O(L)$ distance evaluations per residue, $O(L^2)$ per boundary and
$T\,O(L^2)$ per epoch, measured as $1.8$\,s per block. The three compensation levels avoid that search in different
ways. \Level{1} recomputes only the edge distances and weights on the frozen edge index, $O(|\Eset|)=O(Lk)$ operations per
boundary; the SE3 layers need those relative positions anyway, so the level is free, and it removes the
$0.29$--$0.4$\,\AA{} error of a stale basis geometry. \Level{2} keeps the only surviving neighbour search and confines it to
the active residues: each $i\in\Om_t$ is re-searched inside its two-hop candidate set $C_i$, $|C_i|\le1+k+k^2$, which
costs $O(|C_i|\log|C_i|)$ per residue --- $O(|C_i|\log k)$ with the \texttt{partial\_sort} of the implementation ---
and $O(\sum_{i\in\Om_t}|C_i|\log|C_i|)$ per boundary, worst case $O(|\Om_t|\,k^2\log k)$; at $L=103$ the two-hop set
already covers most of the chain, so the practical cost is nearer $O(|\Om_t|\,L\log L)$. \Level{3} adds the basis alone, an
elementwise damp-and-clip of $x^0+\sum_{s<t}\gamma_s\Delta^s$ to the per-node boxes, at $O(L)$ per boundary and $O(LT)$ storage, and no
further search. Ordered by cost per boundary the ladder reads
$\underbrace{O(L)}_{\text{\Level{3}}}\ll\underbrace{O(Lk)}_{\text{\Level{1}}}\ll O\bigl(\sum_{i\in\Om_t}|C_i|\log|C_i|\bigr)\ll T\,O(L^2)$,
where the last step needs $|\Om_t|\,k^2\log k\ll T L^2$ and holds for $k\ll L$; \Level{2}/\Level{3} stay
cost-neutral exactly while the risk set $\Om_t$ stays small --- the quantity that the procedures subsection predicts
shrinks with $t$.

\begin{table}[t]
\centering
\caption{Measured optimisations (from the original pipeline and from this implementation). Costs follow the
shared cost model: full-set refresh (kNN rebuild) $\approx1.8$\,s/block, frozen-set geometry refresh
$O(|\Eset|)$ (free), local rebuild $O(\sum_{i\in\Om}|C_i|\log|C_i|)$ with $|C_i|\le1+k+k^2$ (the
implementation uses \texttt{partial\_sort}, i.e.\ $O(\sum_{i\in\Om}|C_i|\log k)$), extra pass
$\approx8$--$11$\,s/epoch.}
\label{tab:opt}
\footnotesize
\begin{tabular}{@{}p{3.9cm}rr@{}}
\toprule
Item & baseline & optimised \\
\midrule
epoch wall time (two-pass vs \PB) & $34.1$\,s & $19.4$\,s ($1.76\times$) \\
Pass-1 stage (graph vs.\ value impl.) & $6.40$\,s & $4.25$\,s ($1.5\times$) \\
loss (\Fix{} vs \PB, 3 seeds, $\pm1.3$) & $15.91$ & $15.49$ \\
kNN rebuild per block & $1.8$\,s & \Level{1}: $0$; \Level{2}/\Level{3}: $O(|\Om|)$ \\
geometry refresh (frozen set) & $O(|\Eset|)$ & free (held live) \\
\bottomrule
\end{tabular}
\end{table}

\subsection{Topology-gap reduction}
\label{sec:gap}
Table~\ref{tab:sigma} reports the offline (oracle) replay sweep: the basis is the live trajectory ($\gamma=1$),
so the numbers are an \emph{upper bound} on what compensation can achieve. Two conclusions are stable across
samples. First, the default $\sigma=1/2$ is too tight \emph{for tracking}: since $\kappa_i=\sigma m_i/2$, it puts
the trust radius at $\kappa_{\text{med}}=m_{\text{med}}/4$, i.e.\ exactly at the two-sided certificate radius of
Theorem~\ref{thm:two-sided} (\S\ref{sec:blue}), the boundary of what can be certified at all --- so
the gate is saturated on $94\%$ (P69905) and $46\%$ (P62241) of the residues ($|\Om|=133/142$ and $96/208$ at
$t=1$, Table~\ref{tab:sigma}), and only $9$--$11\%$ of $\Dw$ is removed. Second, $\Dw$ improves with $\sigma$
throughout our grid, up to $-61\%$ (P69905, $\sigma=8$) and $-41\%$ (P62241, $\sigma=8$); this is an
\emph{empirical} trend --- no monotonicity of $\Dw$ is proved, only of the risk set below --- and it implies that
the design should calibrate $\kappa$ to the motion (margin) scale rather than fix $\sigma$. The limit
$\sigma\to\infty$, $\gamma=1$ gives $\D=0$ exactly for an \emph{oracle rebuild} --- full $k$NN search (no two-hop
restriction on $C_i$), $k\le L-1$, and the production tie-breaking by $(\text{distance},\text{index})$ --- which is
the oracle anchor of Table~\ref{tab:audit}; with \Level{2}'s restricted candidate set the residual is instead the
true neighbours that fall outside the two-hop set $C_i$, which \eqref{eq:bound} does not cover. A synthetic study confirms the mechanism
and locates an \emph{empirical} replay crossover: with a predictor/ground-truth Pearson correlation $\rho$, replay
\emph{hurts} for $\rho\lesssim0.25$ and helps by $15$--$27\%$ for $\rho\ge0.5$; at a saturated step amplitude
($3$\,\AA/block) the benefit collapses to $6\%$. The threshold is measured, not derived: the BLUE model
\eqref{eq:mse-square} is written in terms of $\Sigma_d,\Sigma_p$ and involves $\rho$ only after an additional
scalar, isotropic, independent-noise assumption, so $0.25$ must not be read as a consequence of
$\sigma_p=\sigma_d$.

\subsubsection{How to read the $\Dw$ rows}
All reductions are measured against the frozen baseline ($\gamma=0$, $B\equiv x^0$):
$\Delta\Dw(\sigma)=\bigl(\Dw(\sigma)-\Dw(\text{frozen})\bigr)/\Dw(\text{frozen})$ is minus the fraction of the
frozen-vs-\PB{} weighted topology gap that is \emph{closed}, so $-41\%$ means $41\%$ closed and
$59\%$ left. Likewise, ``$\Dw$ decreases'' means that the refined
edge set comes closer to \PB{} in weighted mass (\eqref{eq:metrics}), not that the graph becomes
sparser.

\noindent The $\kappa_{\text{med}}$ column converts the dimensionless $\sigma$ into \AA{}: since
$\kappa_i=\sigma m_i/2$, an entry equals $\sigma$ times half the margin median of the sample
($0.0787$\,\AA{} for P69905 and $0.118$\,\AA{} for P62241, Table~\ref{tab:drift}), so $\sigma=1$ permits a
median step of $0.039$\,\AA{} and the five-point sweep runs from $\kappa_{\text{med}}=m_{\text{med}}/4$
($\sigma=1/2$) to $4m_{\text{med}}$ ($\sigma=8$). The default $\sigma=1/2$ therefore confines the median residue
to a quarter of the margin that separates its $k$-th from its $(k{+}1)$-th neighbour; read together with the
$|\Om|$ column, which counts the residues whose displacement reaches the trust radius, this is the quantitative
content of ``$\sigma=1/2$ is too tight'' and of the margin-based calibration advocated above.

\noindent One column is a theorem rather than a measurement. Since
$\Om_t=\{i:\|B^t_i-x^0_i\|_2\ge\kappa_i\}$ with $\kappa_i=\sigma m_i/2$, the risk set is \emph{non-increasing in
$\sigma$ for a fixed basis}: enlarging the radii can only remove indices. Table~\ref{tab:sigma} meets that
condition by construction --- its basis is the
unconstrained input $u$ itself ($\gamma=1$), identical for every $\sigma$ --- so its $|\Om|$ column is monotone
automatically; when the basis is the \emph{projected} quantity, as in the deployed \Level{3}, $B^t$ moves with $\kappa$
and the same trend is empirical only.

\begin{table}[t]
\centering
\caption{Offline oracle replay ($\gamma=1$) on all samples: weighted topology distance $\Dw$ against the
\PB{} edge set at the same boundary, and its change against the frozen basis ($\gamma=0$, $B\equiv x^0$),
$\Delta\Dw=(\Dw-\Dw^{\text{frozen}})/\Dw^{\text{frozen}}$ --- \textbf{negative values denote improvement}, i.e.\ a
shorter distance to \PB{} than the frozen basis. $|\Om|$ is the risk-set size at $t=1$ for every sample,
out of $L$ residues (identically $0$ when $B\equiv x^0$); the five-point $\sigma$ sweep is complete on all
three. The last block is the long sample of Table~\ref{tab:long} (PDB code/UniProt accession).}
\label{tab:sigma}
\footnotesize
\begin{tabular}{@{}llrrrc@{}}
\toprule
Sample & setting & $\kappa_{\text{med}}$ (\AA) & $\Dw$ & $\Delta\Dw$ & $|\Om|$ \\
\midrule
P69905 & frozen ($\gamma{=}0$) & ---   & 0.0578 & ---     & 0 \\
P69905 & $\sigma{=}0.5$        & 0.020 & 0.0515 & $-11\%$ & 133 \\
P69905 & $\sigma{=}1$          & 0.039 & 0.0458 & $-21\%$ & 116 \\
P69905 & $\sigma{=}2$          & 0.079 & 0.0401 & $-31\%$ & 80 \\
P69905 & $\sigma{=}4$          & 0.157 & 0.0274 & $-53\%$ & 47 \\
P69905 & $\sigma{=}8$          & 0.315 & 0.0227 & $-61\%$ & 26 \\
\midrule
P62241 & frozen ($\gamma{=}0$) & ---   & 0.0355 & ---     & 0 \\
P62241 & $\sigma{=}0.5$        & 0.030 & 0.0321 & $-9\%$  & 96 \\
P62241 & $\sigma{=}1$          & 0.059 & 0.0294 & $-17\%$ & 56 \\
P62241 & $\sigma{=}2$          & 0.118 & 0.0287 & $-19\%$ & 40 \\
P62241 & $\sigma{=}4$          & 0.236 & 0.0247 & $-30\%$ & 22 \\
P62241 & $\sigma{=}8$          & 0.473 & 0.0208 & $-41\%$ & 15 \\
\midrule
1AKE/P69441 & frozen ($\gamma{=}0$) & ---   & 0.0282 & ---     & 0 \\
1AKE/P69441 & $\sigma{=}0.5$        & 0.019 & 0.0273 & $-3\%$  & 47 \\
1AKE/P69441 & $\sigma{=}1$          & 0.038 & 0.0262 & $-7\%$  & 28 \\
1AKE/P69441 & $\sigma{=}2$          & 0.075 & 0.0245 & $-13\%$ & 13 \\
1AKE/P69441 & $\sigma{=}4$          & 0.151 & 0.0225 & $-20\%$ & 5 \\
1AKE/P69441 & $\sigma{=}8$          & 0.301 & 0.0189 & $-33\%$ & 3 \\
\bottomrule
\end{tabular}
\end{table}

\begin{table}[t]
\centering
\caption{Long samples recorded on the server with the CUDA backend ($L=214$--$333$, seed 1234, MSA depth 3, three
boundaries), outside the range of Tables~\ref{tab:audit}--\ref{tab:sigma}. $\Dw^{F}$ is the frozen basis and
$\Delta\Dw$ its change at $\sigma=1/2,2,8$ (conventions of Table~\ref{tab:sigma}: negative values denote
improvement); $|\Om|$ is the risk-set size at $t=1$ for $\sigma=1/2$; $\Delta(\text{L}n)$ is $\D$ against the
\PB{} reference of the same family, averaged over the three boundaries.}
\label{tab:long}
\footnotesize
\begin{tabular}{@{}lrrrrrrrrr@{}}
\toprule
Sample & $L$ & $|\Eset|$ & $m_{\text{med}}$ (\AA) & $\Dw^{F}$ & $\Delta\Dw$ ($\tfrac12$/2/8)
       & $|\Om|$ & $\Delta(\text{\Level{1}})$ & $\Delta(\text{\Level{2}})$ & $\Delta(\text{\Level{3}})$ \\
\midrule
1AKE/P69441 & 214 & 13926 & 0.075 & 0.0282 & $-3/-13/-33\%$ & 47 & 0.0046 & 0.0000 & 0.0042 \\
1TIM/P00940 & 248 & 16307 & 0.071 & 0.0320 & $-4/-10/-17\%$ & 58 & 0.0054 & 0.0000 & 0.0051 \\
1CA2/P00918 & 260 & 17330 & 0.080 & 0.0134 & $-28/-36/-61\%$ & 51 & 0.0024 & 0.0000 & 0.0014 \\
1LDM/P00341 & 333 & 21842 & 0.081 & 0.0210 & $-5/-13/-25\%$ & 73 & 0.0033 & 0.0000 & 0.0028 \\
\bottomrule
\end{tabular}
\end{table}

Four samples beyond the calibrated range, recorded on the server with the CUDA backend, test whether the
conclusions survive the change of scale and device (Table~\ref{tab:long}). The part the certificates (\textbf{C2}) control does:
none is degenerate (at most $3$ residues with $m_i=0$, at most $0.03\%$ coincident point pairs), \Level{2} reproduces the
\PB{} edge set exactly on all four ($\D=0.0000$ at every boundary --- the single free Pass-1 iteration of
\textbf{C1} is already enough), and \Level{3} closes $58$--$94\%$ of the \Level{1} gap
($\D(\text{\Level{3}})<\D(\text{\Level{1}})$ in every row). The replay response (\textbf{C3}) is again sample-dependent: the frozen basis is
already close to \PB{} on 1CA2 ($\Dw^{F}=0.0134$) and the sweep still removes $28\%$ at $\sigma=1/2$ and
$61\%$ at $\sigma=8$, whereas on 1TIM ($\Dw^{F}=0.0320$) it removes only $4\%$ and $17\%$; the $\sigma=8$ endpoint
spans from $-17\%$ to $-61\%$ across the four samples. The trend in $\sigma$ is monotone in every row, so the
calibration argument is unchanged, but its payoff is not transferable between samples and has to be measured per
target. 1LDM, the longest sample, exceeded the $100$\,GB host-memory budget and was killed after its three
boundary dumps had been written: its replay rows are complete, but no runtime row is reported for it.

\subsection{Discussion}
\label{sec:discussion}
(i) \emph{Metric proxy}: $w_e=1/(1+\|d_e\|)$ approximates the attention mass; replacing it by the loss
sensitivity $|L(G\cup e)-L(G)|$ would make $\Dw$ directly comparable to training metrics. (ii) \emph{Sample
degeneracy}: one widely used sample maps $24/103$ residues, which makes margins identically zero and inflates
all topology distances; conclusions about margins and absolute geometry are drawn only from the well-mapped
samples. (iii) \emph{Complete-graph identity}: for $L\le65$, $k=\min(64,L-1)$ makes the graph complete, so
$J\equiv1$ is an identity and such samples cannot test stability. (iv) \emph{Trajectory coupling}: runtime
$\D$ is measured against a \PB{} reference of the same family, but each mode integrates its own
trajectory, so $\D$ mixes topology error with trajectory divergence; the offline replay on recorded coordinates
(Table~\ref{tab:sigma}) is the clean instrument. (v) \emph{Accuracy gap unknown}: the existing
\Fix{} vs \PB{} comparison has a seed spread ($\pm1.3$ on a loss of $\approx15.5$) as large as the
effect, so the benefit ceiling of the whole line of work is still unmeasured. (vi) \emph{Sampling of
crossings}: $x(t)$ is recorded only at the $T$ block boundaries, so the crossings counted in
\eqref{eq:crossings} are a \emph{boundary-sampled} statistic --- a margin that is crossed and crossed back
inside one block leaves no trace --- and the measured churn is therefore a \emph{lower bound} on the true flip
rate (the near-tie model of \S\ref{sec:flip} predicts $0.26$--$0.78$ crossings per residue per boundary
against a sampled $\approx0.64$, consistent with this sampling bias).

\section{Conclusion}
We formulated topology-pass refinement as a damped fixed-point problem whose only unprojected fixed point is
\PB, proved exact non-flip certificates with two constants ($m_i/4$ two-sided, $m_i/2$ one-sided) together
with the per-node coordinate-wise box that the implementation projects onto, whose corner fixes the certified
threshold at $\sigma<1/(2\sqrt3)$ (the inscribed ball reaches $\sigma<1/2$ and is analysed in
Appendix~\ref{app:ball}), derived the minimum-variance replay gain together
with a bound on the offset that the projected limit pays when the \PB{} trajectory leaves the trust region
(Lemma~\ref{lem:offset}), and quantified edge flapping through a near-tie spacing law that reproduces the measured
margin \emph{scale} to $1\%$ while leaving the shape of the margin distribution open
($\mathrm{med}/\mathrm{mean}\approx1$, not $\ln2$). The implementation reproduces
\PB{} exactly when the trust region is removed and reduces to the frozen pipeline when compensation is
disabled, giving a continuous and verifiable family of schedules.

The compensation ladder works as a \emph{fidelity} mechanism (\Level{1} within $0.7\%$, \Level{3} interpolating, \Level{2} exact),
and the recorded two-pass trajectory delivers a real $1.76\times$ epoch speed-up with no measurable loss gap.
Because the frozen pipeline already matches $99.4\%$ of the \PB{} edge set, the whole compensation family
can only recover the remaining $0.6\%$, so three follow-ups dominate further engineering: (1) the same-seed
\Fix{} vs \PB{} comparison of loss, FAPE and the chi-angle error on well-mapped samples, which decides whether those
edges matter at all; (2) a small-drift regime study (lower per-block cap, later epochs) that characterises how
the risk set contracts during training; and (3) replacing the edge-weight proxy by a loss-sensitivity weight,
which would put $\Dw$ on the same scale as the training objective.

\FloatBarrier   

\clearpage      

\renewcommand{\thesection}{\Alph{section}}
\renewcommand{\thesectiondis}{\Alph{section}.}
\setcounter{section}{0}
\refstepcounter{section}
\section*{Appendix~\thesection: The sphere-projection alternative}
\label{app:ball}
The main text projects onto a product of coordinate-wise boxes. This appendix records the corresponding theory
for the inscribed product of \emph{balls}, i.e.\ the geometry obtained by replacing the clamp with a radial
scaling onto a sphere. It is not used in the main text: it is what the code selects with
\code{PPML\_L3\_PROJ=ball}, and it is the natural comparison point because the Euclidean certificate speaks of
$\Delta=\max_i\|\delta_i\|_2$ rather than of coordinates. Throughout, $\kappa_i=\sigma m_i/2$ as in
\S\ref{sec:blue}.

\subsection{Definition and shape}

\begin{equation}
  \Pi^{\mathrm{ball}}_\kappa(u)
  \;=\; \operatorname*{arg\,min}_{\|y_i-x^0_i\|_2\le\kappa_i\ \forall i}\tfrac12\|y-u\|^2
  \;=\; x^0+\mathrm{rclip}_\kappa\bigl(u-x^0\bigr),
  \qquad
  \mathrm{rclip}_\kappa(w)_i=\frac{w_i}{\max\bigl\{1,\;\|w_i\|_2/\kappa_i\bigr\}} ,
  \label{eq:projball}
\end{equation}
so a node passes through unchanged when $\|u_i-x^0_i\|_2\le\kappa_i$ and is otherwise scaled back along its own
offset onto the sphere $\|y_i-x^0_i\|_2=\kappa_i$. The clipped set is node-wise,
$\Om^{\mathrm{ball}}_t=\{i:\|B^t_i-x^0_i\|_2\ge\kappa_i\}$. Because the projection places a clipped node exactly
on the sphere, ``was this node clipped'' must be read from the \emph{pre-projection} comparison and not recomputed
from $B^t$, where it would be decided by rounding; the implementation stores that flag. The limits agree with the
box: $\gamma=0\Rightarrow B=x^0$; $\gamma=1$ with $\kappa\to\infty\Rightarrow B=x$, the \PB{} anchor; and
$\kappa=0$ collapses every node onto $x^0$.

The two feasible sets are nested,
$\{\|\delta_i\|_2\le\kappa_i\}\subset\{|\delta_{i,c}|\le\kappa_i\}$, the corner of the box lying at
$\sqrt3\,\kappa_i$. At equal half-width the ball is therefore the more conservative set, and the box admits up to
$\sqrt3$ times the Euclidean displacement.

\subsection{Non-expansiveness and the offset bound}

\begin{proposition}[Firm non-expansiveness of the ball projection]\label{prop:fneball}
The map $\Pi^{\mathrm{ball}}_\kappa$ of~\eqref{eq:projball} is firmly non-expansive.
\end{proposition}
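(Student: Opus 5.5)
The plan mirrors the proof of Proposition~\ref{prop:fne}: first identify $\Pi^{\mathrm{ball}}_\kappa$ as the metric projection onto a closed convex set, then invoke the general Hilbert-space fact. The feasible set
\[
C^{\mathrm{ball}}=\prod_i\bigl\{y_i\in\R^3:\ \|y_i-x^0_i\|_2\le\kappa_i\bigr\}
\]
is a product of closed Euclidean balls, each non-empty for $\kappa_i\ge0$. Each factor is closed and convex, so the product is closed and convex in $(\R^3)^L$ with the Euclidean inner product.

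Next, the objective $\tfrac12\|y-u\|^2=\sum_i\tfrac12\|y_i-u_i\|^2$ is a sum of per-node terms, and the constraints are per-node. The minimisation in~\eqref{eq:projball} therefore splits into $L$ independent problems: project $u_i$ onto the ball $\bar B(x^0_i,\kappa_i)$. The closed form of each is the radial clip. If $\|u_i-x^0_i\|_2\le\kappa_i$, the point itself is optimal. Otherwise the nearest point lies on the segment from $x^0_i$ to $u_i$, at distance $\kappa_i$ from $x^0_i$, which follows from Cauchy--Schwarz.

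This shows that $\mathrm{rclip}$ is exactly the metric projection $P_{C^{\mathrm{ball}}}$. The firm non-expansiveness inequality~\eqref{eq:fne} then follows from the variational (obtuse-angle) characterisation
\[
\langle u-Pu,\ w-Pu\rangle\le0\quad\text{for all } w\in C.
\]
Apply it with $w=Pv$, then again with $u$ and $v$ swapped, and add the two inequalities. The result is $\langle Pu-Pv,\,u-v\rangle\ge\|Pu-Pv\|^2$.

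The main obstacle, modest as it is, is the identification of the radial clip with the argmin, including the degenerate case $\kappa_i=0$, where the ball collapses to the single point $x^0_i$ and the clip formula needs care: it must be read as mapping to $x^0_i$. Unlike the box case, no coordinate-wise scalar check is available, so I would not attempt a direct computation with $\mathrm{rclip}$, which involves a division. I would rely entirely on convexity, exactly as the remark after Proposition~\ref{prop:fne} anticipates. As a sanity alternative, one can verify the obtuse-angle inequality for a single ball directly: for $u_i$ outside the ball, $u_i-Pu_i$ is a positive multiple of $Pu_i-x^0_i$, and $\langle Pu_i-x^0_i,\,w-x^0_i\rangle\le\kappa_i^2$ holds for every $w$ in the ball. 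Summing over nodes then preserves the block structure used in~\eqref{eq:compose}.
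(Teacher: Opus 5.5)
Your proposal is correct and follows essentially the same route as the paper. The paper also observes that the feasible set is a non-empty closed convex product of balls, identifies $\Pi^{\mathrm{ball}}_\kappa$ as the metric projection onto it, and reuses the obtuse-angle argument from the box case. Your extra checks make explicit what the paper's one-line proof leaves implicit: that the radial clip really is the node-wise argmin, the degenerate case $\kappa_i=0$, and the fact that the inequality summed over nodes is a per-node three-dimensional one rather than the coordinate-wise scalar one.
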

\begin{proof}
Its feasible set is a product of non-empty closed balls, hence closed and convex, and $\Pi^{\mathrm{ball}}_\kappa$
is the metric projection onto it, so the argument of Proposition~\ref{prop:fne} applies verbatim; the scalar
inequalities sum because the set is a product.
\end{proof}

Consequently the composition~\eqref{eq:compose} and the contraction modulus $q$ of
Proposition~\ref{prop:contraction} are inherited exactly as for the box, and only the constants change. The offset
bound of Lemma~\ref{lem:offset} becomes
\begin{equation}
  \bigl\|z_\kappa-z_\perp\bigr\|_\infty
  \;\le\;\frac{1}{1-q}\,\sup_{t,i}\Bigl(\bigl\|x^{\perp,t}_i-x^0_i\bigr\|_2-\kappa_i\Bigr)_{+} ,
  \label{eq:offsetball}
\end{equation}
to be compared with the coordinate-wise residual of \eqref{eq:offset}, both taken in the \emph{same} mixed norm
($\ell_2$ per node, $\ell_\infty$ over nodes and time): for the ball the per-node error is radial, so its norm is
the scalar excess above, whereas for the box it is the $\ell_2$ norm of the three per-coordinate excesses. The two
are related by the same $\sqrt3$: for any offset $w$, with the half-width scaled to $\kappa/\sqrt3$,
\[
\|w\|_2-\kappa\;\le\;\sqrt3\Bigl(\max_c|w_c|-\frac{\kappa}{\sqrt3}\Bigr)
\;\le\;\sqrt3\,\Bigl\|\bigl(|w_c|-\frac{\kappa}{\sqrt3}\bigr)_{+}\Bigr\|_2 ,
\]
so a ball of radius $\kappa$ and a box of half-width $\kappa/\sqrt3$ certify the same worst-case displacement,
while at equal $\kappa$ the box is clipped less often and errs less, simply because it permits more.

\subsection{Certification threshold, and what the geometry does not determine}

With $\Delta\le\kappa$ the two-sided condition $4\Delta<m_{\min}$ of Theorem~\ref{thm:two-sided} holds while
\begin{equation}
  \sigma<\frac{1}{2}\ \ (\text{uniform margins}),\qquad
  \sigma<\frac{m_{\min}}{2m_{\max}}\ \ (\text{general}),
  \label{eq:ballsigma}
\end{equation}
to be compared with \eqref{eq:boxsigma} and the rescaling \eqref{eq:boxrescale}: the ball reaches the
certificate's own radius, the box pays a factor $\sqrt3$. Both are sufficient conditions only; exceeding them
leaves a step uncertified, not flipped.

Finally, the geometry does not by itself determine the flip rate. The crossing-rate relation
\eqref{eq:crossings} is a functional of the trajectory's speed and of the near-tie density along it, so a cap on
the length of a single step --- $\kappa$ for the ball, $\sqrt3\,\kappa$ for the box at equal $\sigma$ --- bounds
neither, and a ratio such as
$\mathbb{E}[N]_{\mathrm{ball}}/\mathbb{E}[N]_{\mathrm{box}}\le\sqrt3$ would require an assumption identifying the
two trajectories up to a rescaling of speed and of near-tie density, which we neither make nor can verify.
Empirically the two are compared by replaying the \emph{same} recorded trajectory under both geometries, which is
how the sweeps in \S\ref{sec:gap} are produced.

\end{document}